\newcommand{\argmax}{\arg\,\max}
\newtheorem{theorem}{Theorem}
\newcommand{\BibTeX}{B\kern-.05em{\sc i\kern-.025em b}\kern-.08em\TeX}
\begin{document}

%%%%%%%%%%%%%%%%%%%%%%%%%%%%%%%%%%%%%%%%%%%%%%%%%%%%%%%%%%%%%%%%%%%%%%%%

\begin{frontmatter}

%%% Use this command to specify your submission number.
%%% In doubleblind mode, it will be printed on the first page.

\paperid{1867} 

%%% Use this command to specify the title of your paper.

\title{Scalable Variational Causal Discovery\\Unconstrained by Acyclicity}

%%% Use this combinations of commands to specify all authors of your 
%%% paper. Use \fnms{} and \snm{} to indicate everyone's first names 
%%% and surname. This will help the publisher with indexing the 
%%% proceedings. Please use a reasonable approximation in case your 
%%% name does not neatly split into "first names" and "surname".
%%% Specifying your ORCID digital identifier is optional. 
%%% Use the \thanks{} command to indicate one or more corresponding 
%%% authors and their email address(es). If so desired, you can specify
%%% author contributions using the \footnote{} command.

\author{\fnms{Nu}~\snm{Hoang}\thanks{Corresponding Author. Email: nu.hoang@deakin.edu.au}}
\author{\fnms{Bao}~\snm{Duong}}
\author{\fnms{Thin}~\snm{Nguyen}} 

\address{Applied Artificial Intelligence Institute (A\textsuperscript{2}I\textsuperscript{2}), Deakin University, Australia.}

%%% Use this environment to include an abstract of your paper.

\begin{abstract}
Bayesian causal discovery offers the power to quantify epistemic uncertainties among a broad range of structurally diverse causal theories potentially explaining the data, represented in forms of directed acyclic graphs (DAGs). However, existing methods struggle with efficient DAG sampling due to the complex acyclicity constraint. In this study, we propose a scalable Bayesian approach to effectively learn the posterior distribution over causal graphs given observational data thanks to the ability to generate DAGs without explicitly enforcing acyclicity. Specifically, we introduce a novel differentiable DAG sampling method that can generate a valid acyclic causal graph by mapping an unconstrained distribution of implicit topological orders to a distribution over DAGs. Given this efficient DAG sampling scheme, we are able to model the posterior distribution over causal graphs using a simple variational distribution over a continuous domain, which can be learned via the variational inference framework. Extensive empirical experiments on both simulated and real datasets demonstrate the superior performance of the proposed model compared to several state-of-the-art baselines.
\end{abstract}

\end{frontmatter}

%%%%%%%%%%%%%%%%%%%%%%%%%%%%%%%%%%%%%%%%%%%%%%%%%%%%%%%%%%%%%%%%%%%%%%%%

\section{Introduction}

Causal inference \citep{judea2009causality} offers a powerful tool for tackling critical research questions in diverse fields, such as policy decision-making, experimental design, and enhancing AI trustworthiness. However, current causal inference algorithms typically require an input of a directed acyclic graph (DAG), encapsulating causal relationships among variables of interest. Unfortunately, identifying the true causal DAG often necessitates extensive experimentation, which can be time-consuming and ethically problematic in certain situations, hindering the application of causal inference to high-dimensional problems. Therefore, there is a pressing need to explore methods for discovering the causal DAG solely from observational data, which is typically more readily available \citep{spirtes2001causation,jonas2017elements}. Nevertheless, a major hurdle in causal discovery using observational data is the non-identifiability issue of causal models when multiple
DAGs may induce the same observed data mainly due to scarce data,
model mis-specification, and limited capability of optimizers. Bayesian inference is a promising approach to mitigate this problem by estimating the posterior distribution over causal DAGs, allowing for capturing epistemic uncertainties in causal structure learning. Moreover, this richer representation can then be leveraged for various tasks, including active causal discovery, where we strategically collect additional data to refine our understanding of causal relationships \citep{agrawal2019abcdstrategy,tigas2023differentiable}. 

\begin{figure}[h]
\begin{centering}
\includegraphics[width=1.0\columnwidth]{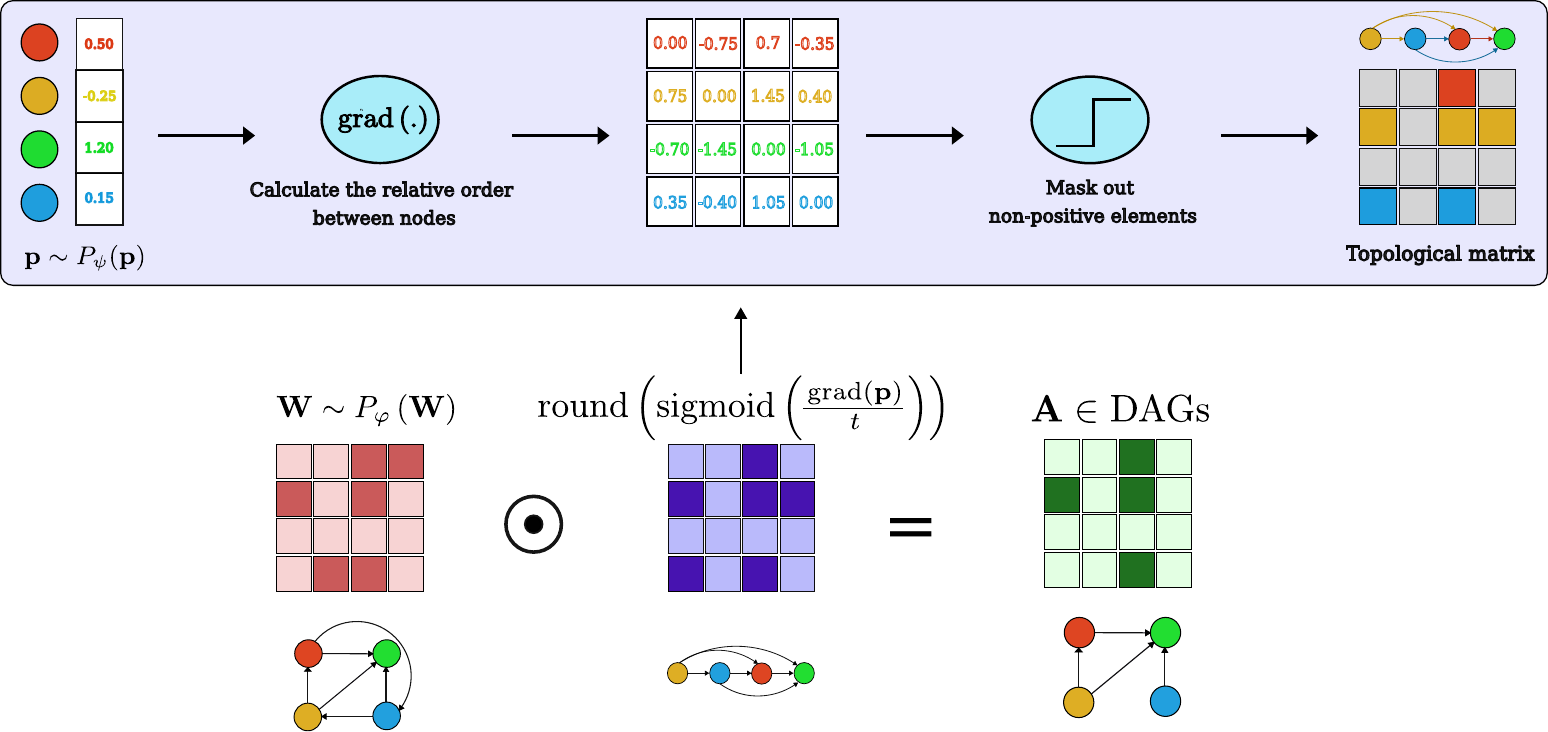}
\end{centering}
\caption{Overview of the proposed differentiable DAG sampling. Given two arbitrary probabilistic models over real vectors of $d$ nodes and binary matrices of $d\times d$, we first sample a priority scores vector $\mathbf{p}$ and a binary matrix $\mathbf{W}$. Then, we construct a binary adjacency matrix of a complete topological graph corresponding to $\mathbf{p}$, called a topological matrix, using the gradient operator followed by a tempered sigmoid function. The final DAG adjacency matrix $\mathbf{A}$ is the element-wise multiplication of $\mathbf{W}$ and the topological matrix derived from $\mathbf{p}$.\newline}
\label{fig:overview}
\end{figure}

Due to the exponential explosion in the number of possible DAGs with increasing variables \citep{chickering2004largesample}, inferring the posterior distribution becomes computationally impossible for large-scale problems. An efficient DAG sampler is therefore crucial to unlock the scalability of Bayesian causal discovery.
%Since the number of DAGs explodes exponentially with the number of variables \citep{chickering2004largesample}, making inferring the posterior distribution over DAGs computationally intractable for large-scale problems, an efficient DAG sampler is therefore crucial to unlock the scalability of Bayesian causal discovery.
%A key challenge in Bayesian causal discovery lies in the design of an
%efficient DAGs sampler due to the huge number of possible DAGs that scales super-exponentially
%with respect to the number of nodes \citep{chickering2004largesample},
%making the inference of the posterior distribution over DAGs computationally expensive and preventing
%the application of Bayesian methods for large-scale problems. 
In particular,
several studies \citep{viinikka2020towards,deleu2022bayesian,atanackovic2023dyngfntowards}
have leveraged sequential models, e.g., Markov Chain Monte Carlo (MCMC)
or GFlowNets, to sample DAGs in combinatorial spaces, leading to high
computational demands. Recent advancements aim to enhance the inference
efficiency through the development of gradient optimization methods
for Bayesian structure learning \citep{cundy2021bcdnets2,lorch2021dibsdifferentiable,charpentier2022differentiable,annadani2023bayesdag},
which handle the acyclicity constraint by either integrating a smooth
DAG regularization into the objective function \citep{lorch2021dibsdifferentiable}
or relying on the permutation matrix-based DAG decomposition \citep{cundy2021bcdnets2,charpentier2022differentiable,annadani2023bayesdag}.
On the one hand, the inclusion of acyclicity regularization in the
objective function may introduce additional computational costs, impeding
scalability for high dimensional problems. For instance, the computational
complexity for computing NOTEARS \citep{zheng2018dagswith}, a widely
used DAG constraint function \citep{yu2019daggnndag,zheng2020learning,lachapelle2020gradientbased,ng2022maskedgradientbased},
grows cubically with the number of nodes. In addition, DAG regularizer
does not guarantee a complete DAG generation, potentially requiring
additional post-processing steps \citep{buhlmann2014camcausal}. On
the other hand, the permutation matrix-based approach, factorizing
the adjacency matrix into upper triangular and permutation matrices,
may expose high complexities due to the difficulty of approximating
the discrete distribution over the permutation matrix. For example,
various Bayesian studies \citep{cundy2021bcdnets2,charpentier2022differentiable,annadani2023bayesdag}
exploit the Gumbel-Sinkhorn operator, yet the process of turning a
parameter matrix into a permutation matrix incurs significant computational
overhead. 

%This study overcomes the computational challenge with the
%introduction of a scalable Bayesian causal discovery method for efficiently
%learning the posterior distribution over causal DAGs given observational
%data through the removal of explicit restrictions on acyclicity. 
To address computational limitations of current approaches, we introduce a novel differentiable DAG sampling method, which maps a constraint-free distribution of implicit topological
orders to a distribution over DAGs, eliminating the need for enforcing acyclicity explicitly. Inspired by \citep{yu2021dagswith}, we sample a DAG via generating a binary adjacency matrix representing any directed graph and a vector of nodes' priority scores. These scores implicitly define an topological order when sorted, inducing a topological matrix that effortlessly ensures acyclicity. Specifically, the topological matrix can be computed promptly through the pairwise differences of a vector of nodes' priority scores with a tempered sigmoid function. Finally, the proposed approach obtains a DAG through the element-wise multiplication of the generated binary adjacency matrix and the topological matrix as shown in  Figure~\ref{fig:overview}, reducing the time complexity for DAG sampling to quadratic with respect to the number of nodes, enabling DAG generation with thousands of nodes in a matter of milliseconds. Based on this competent DAG sampling scheme, we are able to model
the posterior distribution over DAGs using a simple variational distribution over a continuous domain, which can be learned via the variational inference framework. Our
main contributions are outlined as follows: 
%offering a cost effective approach compared to the matrix multiplication in the permutation matrix-based DAG representation employed in prior studies \cite{cundy2021bcdnets2,charpentier2022differentiable,annadani2023bayesdag}.
%Specifically, for 5,000 nodes, our method
%achieves an impressive speedup of approximately 34 times compared
%to Gumbel-Sinkhorn approach employed in \citep{cundy2021bcdnets2}. 
% This establishes a fully differentiable Bayesian
% causal structure learning method, ensuring the output of DAG samples
% without the need for any regularizer or post-processing step. 
\begin{itemize}
\item We present a novel differentiable probabilistic DAG model exhibiting scalable sampling for thousands of variables (Section~\ref{subsec:Differentiable-DAG-sampling}).
Furthermore, we provide theoretical justification substantiating the
correctness of the proposed method.
\item We introduce a fast and accurate Bayesian causal discovery method
built upon on our efficient DAG sampling and the variational inference
framework (Section~\ref{subsec:Variational-Inference-DAG}). The
proposed approach ensures the generation of valid acyclic causal structures
at any time during training, accompanied by a competitive runtime
compared to alternative methods.
\item We demonstrate the efficiency of the proposed method through extensive numerical experiments on both synthetic and real
datasets (Section~\ref{sec:Experiment})\footnote{Source code is available at \url{https://github.com/htn274/VCUDA}.}. The empirical results underscore the scalability of our DAG sampling
model and showcase its superior performance in Bayesian causal discovery
when incorporated with the variational inference framework compared
with several baselines. 
\end{itemize}

\section{Related Work}

\textbf{Discrete Optimization. }These methods encompass constraint-based
methods, score-based methods, and hybrid methods, which typically
search for the true causal graph within the original combinatorial
space of DAGs. Constraint-based methods \citep{spirtes2001causation,zhang2011kernelbased,colombo2014orderindependent}
depend on results from various conditional independence tests, while
score-based methods \citep{haughton1988onthe,heckerman1995learning,huang2018generalized}
optimize a predefined score to identify the final DAG by adding, removing,
or reversing edges. In the meantime, hybrid methods \citep{tsamardinos2006themaxmin,ogarrio2016ahybrid}
integrate both constraint-based and score-based techniques to trim
the search space, thus accelerating the overall learning process.

\textbf{Continuous Optimization. }Optimizing in the discrete space
of DAGs is known to be challenging due to the super exponential increase
in complexity with the number of variables. To address this issue,
several studies map the discrete space to a continuous one, thereby
unlocking the application of various continuous optimization techniques.
A pioneering study is NO TEARS \citep{zheng2018dagswith}, which introduced
a smooth function to evaluate the DAG-ness of a weighted adjacency
matrix. The causal structure learning problem is then tackled using
an augmented Lagrangian optimization method. Subsequent studies, inspired
by NO TEARS, have enhanced its efficiency by introducing low-complexity
DAG constraints \citep{lee2019scaling,zhang2022truncated}, or extending
it for non-linear functional models \citep{yu2019daggnndag,zheng2020learning,lachapelle2020gradientbased,ng2022maskedgradientbased}.
In contrast to NO TEARS, recent studies \citep{yu2021dagswith,massidda2023constraintfree}
introduce various DAG mapping functions, which facilitate direct optimization
within the DAG space. Therefore, these mapping functions provide more
scalable and direct approaches without the need to evaluate the DAG
constraint. 

\textbf{Bayesian Causal Structure Learning. }The above studies usually
output the Markov equivalence class (MEC) of the true DAG or a single
DAG, which may not adequately represent the uncertainty in certain
practical scenarios. To address this challenge, Bayesian causal discovery
methods produce a posterior distribution over causal DAGs. Several
studies demonstrate DAG sampling in the discrete space using either
Markov Chain Monte Carlo (MCMC) \citep{su2016improving,viinikka2020towards,kuipers2022efficient}
or GFlowNets \citep{deleu2022bayesian,atanackovic2023dyngfntowards}.
However, these approaches expose slow mixing and convergence. Recent
advancements aim for more efficient inference through the development
of gradient optimization methods for Bayesian structure learning.
However, existing studies still struggle in representing the acyclicity
constraint. For instance, DiBS \citep{lorch2021dibsdifferentiable}
exploits NO TEARS as a DAG regularizer and utilizes Stein variational
approach to learn the joint distribution over DAGs and causal model
parameters. However,  its scalability is limited for large graphs
due to the computational complexity associated with NO TEARS. In contrast,
BCDnets \citep{cundy2021bcdnets2} and DDS \citep{charpentier2022differentiable}
exploit the ordering-based DAG decomposition to parameterize the DAGs
distribution through the multiplication of upper triangular and permutation
matrices. For approximating a discrete distribution over the permutation
matrix, they utilized Gumbel-Sinkhorn operator, which poses a high
time complexity, i.e., cubic with respect to the number of node. To
reduce the complexity of the Gumbel-Sinkhorn operator, BayesDAG \citep{annadani2023bayesdag}
exploits No-Curl constraint \citep{yu2021dagswith} which can decrease
the number of iterations required for the Gumbel-Sinkhorn operator,
yet the scalability of this approach remains a challenge, when dealing
with large graphs.

\section{Preliminary}

\subsection{Problem formulation}

Let $\mathbf{X}\in\mathbb{R}^{n\times d}$ be an observational dataset
consisting of $n$ i.i.d. samples of $d$ random variables from a
joint distribution $P(X)$. The marginal joint distribution $P(X)$
factorizes according to a DAG $\mathcal{G}=\left\langle \mathbf{V},\mathbf{E}\right\rangle $,
where $\mathbf{V}=\{1,2,...,d\}$ is a set of nodes corresponding
to $d$ random variables and $\mathbf{E}$ is a set of edges representing
the dependency between nodes. In other words, $P(X)=\prod_{i=1}^{d}P(X_{i}|X_{\mathrm{pa}(i)})$,
where $X_{\mathrm{pa}(i)}$ denotes a set of parents of nodes $X_{i}$.
We can model the data of a node $X_{i}$ with a structural equation
model as follows:

\begin{equation}
X_{i}=g_{i}\left(f_{i}\left(X_{\mathrm{pa}(i)}\right),\epsilon_{i}\right)\label{eq:sem1},
\end{equation}
where $g_{i}$ and $f_{i}$ are deterministic functions and $\epsilon_{i}$
is an arbitrary noise. In this work, we consider a Gaussian additive
noise model (ANM) \citep{shimizu2006alinear,hoyer2008nonlinear} as
follows: 
\begin{equation}
X_{i}=f_{i}\left(X_{\mathrm{pa}(i)}\right)+\epsilon_{i},\epsilon_{i}\sim\mathcal{N}\left(0,\sigma^{2}\right),\label{eq:anm}
\end{equation}
The DAG $\mathcal{G}$ can be represented by a binary adjacency matrix
${\mathbf{A}\in\{0,1\}^{d\times d}}$, where $A_{ij}=1$ indicates an
edge from $X_{i}$ to $X_{j}$. As a result, Eq.~(\ref{eq:sem1})
can be rewritten as: 
\begin{equation}
X_{i}=f_{i}\left(A_{i}\circ \mathbf{X}\right)+\epsilon_{i},\label{eq:sem2}
\end{equation}
where $A_{i}$ is the $i^{\text{th}}$ column in the adjacency matrix
$\mathbf{A}$, $A_i \circ X$ is the $\mathbf{X}$ matrix with the columns corresponding to the 0-entries of $A_i$ being masked out. Intuitively, the matrix $\mathbf{A}$ plays as a mask
to extract parental nodes of $X_{i}$. Using a binary adjacency matrix
$\mathbf{A}$ to represent a DAG offers a more flexible approach to
model the functional relationship between each node $X_{i}$ and its
parents. 

Given $\mathbf{X}$, we aim to learn the adjacency matrix $\mathbf{A}$
of the DAG $\mathcal{G}$. Let $\mathbb{D}$ be the DAG space of $d$
nodes. Score-based methods usually solve an optimization problem by
maximizing a fitness score of a candidate graph and the data, i.e.,
$F(\mathbf{A},\mathbf{X})$: 
\begin{equation}
\max_{\mathbf{A}}F(\mathbf{A},\mathbf{X})\text{ s.t. }\mathbf{A}\in\mathbb{D}.\label{eq:score}
\end{equation}

\subsection{Bayesian causal structure learning}

Solving Eq.~(\ref{eq:score}) yields a single point DAG solution
that comes with practical limitations, particularly in addressing
the non-identifiability problem in DAG learning. This limitation stems
from the fact that the true causal DAG is only identifiable under
specific conditions. For instance, identifiability in the linear Gaussian
SEM holds in the equal variance noise setting \citep{peters2014identifiability}.
In real-world scenarios, the non-identifiability issue may surface
due to the limited number of observations, leading the point estimation
approach to converge toward an incorrect solution. Considering these
challenges, it proves beneficial to model the uncertainty in DAG learning,
where Bayesian learning emerges as a standard approach.

The ultimate goal of Bayesian causal structure learning methods is
to approximate the posterior distribution over the causal graph given
the observational data, denoted as $\mbox{\ensuremath{P(\mathcal{G}\mid\mathbf{X})}}$.
Using Bayes' rule, the posterior $\mbox{\ensuremath{P(\mathcal{G}\mid\mathbf{X})}}$
can be expressed through the prior distribution $P(\mathcal{G})$
and the marginal likelihood $\mbox{\ensuremath{P(\mathbf{X}\mid\mathcal{G})}}$
as follows: 
\begin{equation}
P(\mathcal{G}\mid\mathbf{X})=\frac{P(\mathbf{X}\mid\mathcal{G})P(\mathcal{G})}{\sum_{\mathcal{G}}P(\mathbf{X}\mid\mathcal{G})P(\mathcal{G})},\label{eq:bcd}
\end{equation}
where the marginal likelihood $P(\mathbf{X}\mid\mathcal{G})$ is defined
as a marginalization of the likelihood function over all possible
parameters for $\mathcal{G}$: 
\begin{equation}
P(\mathbf{X}\mid\mathcal{G})=\int P(\mathbf{X}\mid\mathcal{G},\boldsymbol{\theta}_{\mathcal{G}})P(\boldsymbol{\theta}_{\mathcal{G}}\mid\mathcal{G})d\mathbf{\boldsymbol{\theta}}_{\mathcal{G}}.\label{eq:likelihood}
\end{equation}

The main challenge in Bayesian causal discovery is the intractability
of the denominator in Eq.~(\ref{eq:bcd}) due to the expansive space
of DAGs.

\subsection{DAG representation}

To deal with the computational challenge in DAG learning, several
studies \citep{zheng2018dagswith,lee2019scaling,zhang2022truncated}
have shifted the combinatorial search to a continuous optimization
problem, proving to be more scalable and adaptable to different SEMs.
Specifically, researchers have introduced various smooth functions
to evaluate whether a directed adjacency matrix represents a DAG,
i.e., $\text{\ensuremath{\mathbf{A}} \ensuremath{\in\ }\ensuremath{\mathbb{D}\Leftrightarrow\ }h\ensuremath{\left(\mathbf{A}\right)}}=0$.
Consequently, solving Eq.~(\ref{eq:score}) can be accomplished through
a constrained optimization method, such as the augmented Lagrangian
method. Despite these advancements, existing methods still fall short
in ensuring a valid acyclic output, possibly requiring additional
post-processing steps \citep{buhlmann2014camcausal}. Therefore, we
aim to find a convenient representation for the DAG space that allows
direct optimization within it. This approach guarantees the output
of a DAG at any stage during the learning process.

A common approach for representing a DAG involves utilizing a permutation
matrix $\boldsymbol{\Pi}\in\{0,1\}^{d\times d}$ and an upper triangular
matrix $\mathbf{U}\in\{0,1\}^{d\times d}$, i.e., $\mathbf{A}=\boldsymbol{\Pi}^{T}\mathbf{U}\boldsymbol{\Pi}$.
This formulation arises from the inherent property of a DAG, where
there exists at least one valid permutation (or causal order) $\pi\in\mathbb{R}^{d}$,
implying that there is no direct edge from node $X_{\pi(j)}$ and
$X_{\pi(i)}$ if $\pi(i)<\pi(j)$. To seamlessly integrate this formula
into a continuous DAG learning framework, we can parameterize the
permutation matrix and the upper triangular matrix using Gumbel-Sinkhorn
\citep{mena2018learning} and Gumbel-Softmax \citep{jang2017categorical}
distributions, respectively. Yet, this approach introduces a high
complexity (i.e., $\mathcal{O}\left(d^{3}\right)$) due to the Hungarian
algorithm \citep{kuhn1955thehungarian} used in the forward pass of
Gumbel-Sinkhorn. 

Motivated by the same objective, \citep{yu2021dagswith} introduce
No-Curl, a simple DAG mapping function that facilitates a projection
of any weighted adjacency matrix from an arbitrary directed graph
to the DAG space through a straightforward process: applying element-wise
multiplication with the gradient of a priority scores vector associated
with the graph vertices. We will provide a detailed explanation of
the No-Curl characterization, as it constitutes a crucial component
of our proposed method.

Let $\mathbf{p}\in\mathbb{R}^{d}$ be a vector representing priority
scores of $d$ nodes in a directed graph $\mathcal{G}$. The gradient
of \textbf{$\mathbf{p}$}, denoted by ${\mathrm{grad}\left(\mathbf{p}\right):\mathbb{\mathbb{R}}^{d}\rightarrow\mathbb{R}^{d\times d}}$,
is defined as follows: 

\begin{equation}
\mathrm{grad}\left(\mathbf{p}\right){}_{ij}=\mathbf{p}_{j}-\mathbf{p}_{i}.\label{eq:grad}
\end{equation}

Then, No-Curl proposes mapping $\mathbf{p}$ and a weighted directed
adjacency matrix $\mathbf{W}$, denoted by $\gamma\left(\mathbf{W},\mathbf{p}\right):\mathbb{R}^{d\times d}\times\mathbb{R}^{d}\rightarrow\mathbb{R}^{d\times d}$,
to the DAG space by:
\begin{equation}
\gamma\left(\mathbf{W},\mathbf{p}\right)=\mathbf{W}\circ\mathrm{ReLU}\left(\mathrm{grad}\left(\mathbf{p}\right)\right),\label{eq:no-curl}
\end{equation}
where $\mathrm{ReLU}$ is the rectified linear unit activation function.
For a detailed proof of No-Curl, we refer to \citep{yu2021dagswith}.
Here, we provide an intuitive explanation of the mapping $\gamma\left(\mathbf{W},\mathbf{p}\right)$:
the priority scores vector $\mathbf{p}$ implicitly corresponds to
a causal order of $d$ nodes when we sort $\mathbf{p}$ in increasing
order. To avoid cycles, we exclusively permit edges from nodes with
lower scores to nodes with higher scores, i.e., $\mbox{\ensuremath{p_{i}<p_{j}\Rightarrow(i\rightarrow j)}}$,
which is equivalent to $\mbox{\ensuremath{\mathrm{grad}\left(\mathbf{p}\right)_{ij}>0}}$.
Hence, $\text{\mbox{\ensuremath{\mathrm{ReLU}\left(\mathrm{grad}\left(\mathbf{p}\right)\right)}}}$
eliminates cycle-inducing edges by zeroing out all $\mbox{\ensuremath{\mathrm{grad}\left(\mathbf{p}\right)_{ij}\leq0}}$.
Finally, we remove spurious edges through a Hadamard product with
a weighted adjacency matrix $\mathbf{W}$. 

Unfortunately, the No-Curl characterization is specifically crafted
for a weighted DAG representation. Indeed, we favor a binary representation,
since it offers greater flexibility to model both linear and non-linear
functional relationships. To address this preference, we propose a
novel adaptation of the No-Curl approach for representing binary adjacency
matrices of DAGs, as detailed in the next section. 

\section{Proposed method}

The main goal of this study is to propose a scalable and fully differentiable
framework to approximate the posterior distribution over DAGs given
observational data. To this end, we first introduce a novel measure to parameterize
the probabilistic model over DAGs by extending the No-Curl characterization.
Leveraging this probabilistic model for DAG sampling and integrating
it with the variational inference framework, we then introduce a new
Bayesian causal discovery method, named VCUDA (\textbf{\uline{V}}ariational
\textbf{\uline{C}}ausal Discovery \textbf{\uline{U}}nconstraine\textbf{\uline{d}}
by \textbf{\uline{A}}cyclicity), offering precise capture and effective generation of samples from the complex posterior distribution of DAGs.

\subsection{Differentiable DAG sampling\label{subsec:Differentiable-DAG-sampling}}

As discussed in the previous section, a weighted adjacency matrix
to represent a DAG does not align with our purpose. Therefore, we
extend the No-Curl characterization by substituting the $\textrm{ReLU(.)}$
by a tempered $\text{sigmoid(.)}$ function, allowing us to represent
a DAG using a binary adjacency matrix.
\begin{theorem}
\label{thm:1}Let $\mathbb{\mathbf{A}}\in\{0,1\}^{d\times d}$ be
an adjacency matrix of a graph of $d$ nodes. Then, $\mathbf{A}$
is DAG if and only if there exists a vector of priority
scores $\mathbf{p}\in\mathbb{R}^{d}$ and a corresponding binary matrix $\mathbf{W}\in\{0,1\}^{d\times d}$
such that: 
\[
\mathbf{A}=\nu\left(\mathbf{W},\mathbf{p}\right)=\mathbf{W}\circ\lim_{t\rightarrow0}\mathrm{sigmoid}\left(\frac{\mathrm{grad}(\mathbf{p})}{t}\right)
\]
 where $t>0$ is a strictly positive temperature and $\mathbf{p}$
contains no duplicate elements, i.e. , $p_{i}\neq p_{j}\forall i,j$.
\end{theorem}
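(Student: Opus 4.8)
The plan is to first pin down the object $\lim_{t \to 0^{+}} \mathrm{sigmoid}(\mathrm{grad}(\mathbf{p})/t)$, which I will call the topological matrix $T(\mathbf{p})$, and then argue both implications by reasoning about its structure. Since $\mathrm{sigmoid}(x/t) = 1/(1+e^{-x/t})$, a pointwise limit as $t \to 0^{+}$ (the sense intended by $t>0$) gives $T(\mathbf{p})_{ij} = 1$ when $p_j > p_i$, $T(\mathbf{p})_{ij} = 0$ when $p_j < p_i$, and $T(\mathbf{p})_{ii} = \tfrac{1}{2}$ on the diagonal, where $\mathrm{grad}(\mathbf{p})_{ii} = 0$. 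The distinctness hypothesis $p_i \neq p_j$ for $i \neq j$ is exactly what forces every off-diagonal entry into $\{0,1\}$, so $T(\mathbf{p})$ is the adjacency matrix of the transitive tournament induced by sorting $\mathbf{p}$, up to the ambiguous $\tfrac{1}{2}$ on the diagonal. Recording this limit carefully is the cleanest first step, since both directions reduce to statements about this matrix.

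For the sufficiency direction ($\mathbf{A} = \nu(\mathbf{W},\mathbf{p}) \Rightarrow$ DAG), I would exploit that $\mathbf{A}$ is assumed binary. On the diagonal, $A_{ii} = W_{ii}\cdot \tfrac{1}{2} \in \{0,1\}$ forces $W_{ii} = 0$, hence $A_{ii} = 0$, ruling out self-loops. Off the diagonal, $A_{ij} = W_{ij}\cdot T(\mathbf{p})_{ij}$ with both factors in $\{0,1\}$, so $A_{ij} = 1$ implies $T(\mathbf{p})_{ij} = 1$, i.e. $p_i < p_j$: every edge of $\mathbf{A}$ strictly increases the priority score. A directed cycle $i_1 \to i_2 \to \cdots \to i_k \to i_1$ would then yield $p_{i_1} < p_{i_2} < \cdots < p_{i_k} < p_{i_1}$, a contradiction, so $\mathbf{A}$ has no cycle and is a DAG.

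For the necessity direction (DAG $\Rightarrow$ representation), I would invoke the standard fact that any DAG admits a topological ordering and define $\mathbf{p}$ by setting $p_i$ equal to the rank of node $i$ in such an ordering; these are distinct integers, meeting the no-duplicate requirement. By construction every edge $i \to j$ of $\mathbf{A}$ has $p_i < p_j$, so $A_{ij} = 1 \Rightarrow T(\mathbf{p})_{ij} = 1$. Taking $\mathbf{W} = \mathbf{A}$ then gives $W_{ij}\cdot T(\mathbf{p})_{ij} = A_{ij}$ in every off-diagonal cell, while $A_{ii} = 0 = W_{ii}$ handles the diagonal, so $\nu(\mathbf{W},\mathbf{p}) = \mathbf{A}$.

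I do not expect a deep obstacle, but the point needing genuine care is the diagonal value $\tfrac{1}{2}$: it matches no binary entry and must be neutralized by $W_{ii} = 0$ in both directions. This is also where the distinctness assumption on $\mathbf{p}$ becomes essential, since without it some off-diagonal entries would collapse to $\tfrac{1}{2}$ and the Hadamard product could no longer reproduce an arbitrary binary edge pattern. Justifying the pointwise limit as $t \to 0^{+}$ is the only remaining technical step and is routine.
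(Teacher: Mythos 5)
Your proof is correct and follows essentially the same route as the paper's: both directions rest on identifying the limiting sigmoid matrix with the transitive tournament of the sorted priority scores, taking $\mathbf{W}=\mathbf{A}$ with $\mathbf{p}$ given by the ranks of a topological order for necessity, and observing that every edge strictly increases the score (so no cycle can close) for sufficiency. If anything, you are more careful than the paper, which never addresses the diagonal entries where $\mathrm{grad}(\mathbf{p})_{ii}=0$ forces the limit to $\tfrac{1}{2}$ and must be neutralized by $W_{ii}=0$.
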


\begin{proof}
See the Appendix \cite{hoang2024scalable} for more details. 
\end{proof}
Closely related to our method, COSMO \citep{massidda2023constraintfree}
also introduces a smooth orientation matrix for unconstrained DAG
learning. It is crucial to highlight that our study is motivated by
distinct objectives, specifically, addressing challenges related to
scalability and generalization in Bayesian causal discovery. This
convergence in ideas reaffirms the significance of our approach in
independently tackling common challenges, underscoring its broader
applicability and relevance. 

Based on Theorem~\ref{thm:1}, we further introduce a new probabilistic
model over DAGs space as follows: 
\begin{align}
P(\mathbf{A}) & =\sum_{\mathbf{W}}\int_{\mathbf{p}}P\left(\mathbf{W},\mathbf{p}\right)d\mathbf{p}\nonumber \\
\text{s.t. }\mathbf{A} & =\mathbf{W}\circ\lim_{t\rightarrow0}\mathrm{sigmoid}\left(\frac{\mathrm{grad}(\mathbf{p})}{t}\right)\label{eq:prob_model}
\end{align}
where $P(\mathbf{W})$ and $P(\mathbf{p})$ are distributions over
edges and priority scores, respectively. As a result, Eq. (\ref{eq:prob_model})
provides a fast and assured sampling approach of DAGs without evaluating
any explicit acyclicity constraints. We follow \citep{charpentier2022differentiable},
utilizing the Gumbel-Softmax \citep{jang2017categorical} to model
the discrete distribution over edges. Let $\varphi\in[0,1]$ be the
probability of the existence of an edge from node $X_{i}$ to $X_{j}$.
The Gumbel-Softmax, which is a continuous distribution, enables a
differentiable approximation of samples from a discrete distribution,
e.g., Bernoulli distribution: $\mbox{\ensuremath{\hat{W}_{ij}\in\left[0,1\right]\sim\textrm{Gumbel-Softmax}_{\tau}(\varphi_{ij})}}$,
where $\mbox{\ensuremath{\tau>0}}$ is the temperature parameter controlling
the smoothness of the categorical during sampling. For example, a
low value of $\tau$ generates more likely one-hot encoding samples,
making the Gumbel-Softmax distribution resemble the original categorical
distribution. Consequently, we can directly generate a DAG by sampling
an edge matrix from a Gumbel-Softmax distribution $\mbox{\ensuremath{\mathbf{W}\sim\textrm{Gumbel-Softmax}(\boldsymbol{\varphi})}}$,
and a priority score vector from a multivariate distribution $\mbox{\ensuremath{\mathbf{p}\sim P_{\psi}(\mathbf{p})}}$:
$\mbox{\ensuremath{\mathbf{A}\sim P_{\varphi,\psi}\left(\mathbf{A}\right)}}$
where $\varphi$ and $\psi$ are parameters defined distributions
of $\mathbf{W}$ and $\mathbf{p}$, respectively. 

\textbf{Computational complexity:} Our proposed probabilistic model
significantly speeds up the DAG sampling time compared with related
studies using the Gumbel-Sinkhorn approach, such as BCD-nets \citep{cundy2021bcdnets2}.
To elaborate, our proposed approach requires $\mathcal{O}(d^{2})$
for sampling the edge matrix $\mathbf{W}$ and $\mathcal{O}(d)$ for
sampling the priority scores vector $\mathbf{p}$. This leads to an
overall computational complexity of $\mathcal{O}(d^{2})$. A closely
related study to our approach is BayesDAG \citep{annadani2023bayesdag},
which suggests replacing $\text{ReLU}(.)$ with $\text{Step}(.)$.
However, their approach encounters the problem of uninformative gradients
due to the intrinsic property of $\text{Step}(.)$. Consequently,
\citep{annadani2023bayesdag} still utilizes the Gumbel-Sinkhorn operator
to approximate the distribution over permutation matrices, incurring
high complexity at $\mathcal{O}(d^{3})$. 

\RestyleAlgo{ruled}

\begin{algorithm}[t]
\caption{VCUDA (\textbf{\uline{V}}ariational \textbf{\uline{C}}ausal
Discovery \textbf{\uline{U}}nconstraine\textbf{\uline{d}} by
\textbf{\uline{A}}cyclicity) }
\KwIn{Observational dataset $\mathbf{X}$; prior distributions $P_\text{prior}(\mathbf{W})$, $P_\text{prior}(\mathbf{p})$; temperature $t$, regularizers $\lambda_1$, $\lambda_2$, training iterations $T$}  
Initialize parameters $\varphi$, $\psi$, $\theta$\; \For{$i = 1 \ldots T$}{     \For{$\mathbf{X}_\text{batch} \in \mathbf{X}$}{         Sample $\mathbf{W} \sim P_\varphi(\mathbf{W})$\;         Sample $\mathbf{p} \sim P_\psi(\mathbf{p})$\;         Compute $\mathbf{A} = \mathbf{W} \circ  \text{sigmoid}\left(\frac{\mathrm{grad}(\mathbf{p})}{t}\right)$\;         Compute $\hat{\mathbf{X}} = f_\theta\left(\mathbf{X}_\text{batch}, \mathbf{A}\right)$\;         Maximize ELBO loss (Eq.  (\ref{eq:final_loss})) w.r.t $\varphi$, $\psi$, $\theta$\;     } } 
\KwRet{$\varphi$, $\psi$, $\theta$}
\label{alg:vi-training}
\end{algorithm}

\subsection{Variational Inference DAG Learning \label{subsec:Variational-Inference-DAG}}

As mentioned earlier, Bayesian causal structure learning aims to determine
the posterior distribution over DAGs. However, directly computing
the posterior becomes infeasible due to the intractability of the
marginal data distribution. To address this challenge, we turn to
variational inference. Here, we leverage the probabilistic DAG model
$P_{\varphi,\psi}(\mathbf{A})$ introduced in Section~\ref{subsec:Differentiable-DAG-sampling}
to approximate the true posterior distribution $P\left(\mathbf{A}\mid\mathbf{X}\right)$.
In essence, we aim to optimize the variational parameters to minimize
the KL divergence between the approximate and true posterior distributions
that are equivalent to maximizing the evidence lower bound (ELBO)
objective as follows: 
\begin{align}
\mathrm{\max_{\theta,\varphi,\psi}}\mathcal{L} & =\underbrace{\mathbb{E}_{\mathbf{W,}\mathbf{p}\sim P_{\varphi,\psi}\left(\mathbf{W},\mathbf{p}\right)}\left[\log P_{\theta}\left(\mathbf{X}\mid\mathbf{\mathbf{W}},\text{\ensuremath{\mathbf{p}}}\right)\right]}_{\mathrm{(i)}}\label{eq:elbo}\\
 & -\underbrace{D_{\mathrm{KL}}\left(P_{\varphi,\psi}\left(\mathbf{W,\mathbf{p}}\right)\parallel P_{\mathrm{prior}}\left(\mathbf{W},\mathbf{p}\right)\right)}_{\mathrm{(ii)}}\nonumber.
\end{align}

The objective in Eq.~(\ref{eq:elbo}) consists of two
terms: i) the first is the log-likelihood of the data given the causal
structure model and ii) the second is the KL divergence between the
approximate posterior distribution and the prior distribution. With appropriate choices of variational families and prior models,
the optimized parameters $\theta,\varphi,\psi$ from Eq.~(\ref{eq:elbo}) minimize the divergence between the approximate distribution and the true
distribution, i.e., $P_{\varphi,\psi}(\mathbf{A}) \approx P(\mathbf{A}\mid\mathbf{X})$.

To compute $\textrm{(i)}$, we begin by sampling a DAG adjacency matrix
$\mathbf{A}$ from the approximate distribution in each iteration.
For every node $X_{i}$, we reconstruct its values by applying masking
on the observed data $\mathbf{X}$ with the sampled $\mathbf{A}$.
This is then followed by a transformation $f_{i,\theta}$ , parameterized
using neural networks:
\begin{equation}
\hat{X_{i}}=f_{i,\theta}(A_{i}\circ\mathbf{X}),
\end{equation}
where $A_{i}$ is the $i^{\text{th}}$ column in the adjacency matrix
$\mathbf{A}$. By assuming that the data has a Gaussian distribution
with unit variance, we can approximate the first term by the least
square loss, i.e., $\parallel\mathbf{X}-\mathbf{\hat{\mathbf{X}}}\parallel^{2}$. 

To compute $\textrm{(ii)}$, we initially employ a mean-field factorization
for the variational model, i.e., ${\ensuremath{P_{\varphi,\psi}\left(\mathbf{W},\mathbf{p}\right)=P_{\varphi}\left(\mathbf{W}\right)P_{\psi}\left(\mathbf{p}\right)}}$.
This mean-field factorization provides us a convenient way to calculate
the KL divergence, represented as: 

{\footnotesize{}
\begin{align}
 & D_{KL}\left(P_{\varphi,\psi}\left(\mathbf{W,\mathbf{p}}\right)\parallel P_{\mathrm{prior}}\left(\mathbf{W},\mathbf{p}\right)\right)\\
=\: & \mathbb{E}_{\mathbf{p},\mathbf{W}\sim P_{\varphi,\psi}\left(\mathbf{p},\mathbf{W}\right)}\left[\log\frac{P_{\varphi}\left(\mathbf{W}\right)P_{\mathbf{\psi}}\left(\mathbf{p}\right)}{P_{\mathrm{prior}}\left(\mathbf{W}\right)P_{\mathrm{prior}}\left(\mathbf{p}\right)}\right]\\
=\: & \mathbb{E}_{\mathbf{p},\mathbf{W}\sim P_{\varphi,\psi}\left(\mathbf{p},\mathbf{W}\right)}\left[\log\frac{P_{\varphi}\left(\mathbf{W}\right)}{P_{\mathrm{prior}}\left(\mathbf{W}\right)}+\log\frac{P_{\mathbf{\psi}}\left(\mathbf{p}\right)}{P_{\mathrm{prior}}\left(\mathbf{p}\right)}\right]\\
=\: & \int\int P_{\varphi}\left(\mathbf{W}\right)P_{\mathbf{\psi}}\left(\mathbf{p}\right)\left[\log\frac{P_{\varphi}\left(\mathbf{W}\right)}{P_{\mathrm{prior}}\left(\mathbf{W}\right)}+\log\frac{P_{\mathbf{\psi}}\left(\mathbf{p}\right)}{P_{\mathrm{prior}}\left(\mathbf{p}\right)}\right]d\mathbf{W}d\mathbf{p}\\
=\: & \int\int P_{\varphi}\left(\mathbf{W}\right)P_{\mathbf{\psi}}\left(\mathbf{p}\right)\left[\log\frac{P_{\varphi}\left(\mathbf{W}\right)}{P_{\mathrm{prior}}\left(\mathbf{W}\right)}\right]d\mathbf{W}d\mathbf{p}\\
 & +\int\int P_{\varphi}\left(\mathbf{W}\right)P_{\mathbf{\psi}}\left(\mathbf{p}\right)\left[\log\frac{P_{\mathbf{\psi}}\left(\mathbf{p}\right)}{P_{\mathrm{prior}}\left(\mathbf{p}\right)}\right]d\mathbf{W}d\mathbf{p}\\
=\: & \int P_{\mathbf{\psi}}\left(\mathbf{p}\right)D_{KL}\left(P_{\varphi}\left(\mathbf{W}\right)\parallel P_{\mathrm{prior}}\left(\mathbf{W}\right)\right)d\mathbf{p}\\
 & +\int P_{\varphi}\left(\mathbf{W}\right)D_{KL}\left(P_{\mathbf{\psi}}\left(\mathbf{p}\right)\parallel P_{\mathrm{prior}}\left(\mathbf{p}\right)\right)d\mathbf{W}\\
=\: & D_{KL}\left(P_{\varphi}\left(\mathbf{W}\right)\parallel P_{\mathrm{prior}}\left(\mathbf{W}\right)\right)+D_{KL}\left(P_{\mathbf{\psi}}\left(\mathbf{p}\right)\parallel P_{\mathrm{prior}}\left(\mathbf{p}\right)\right)
\end{align}}
%{\footnotesize{}
%\begin{align}
%D_{KL}\left(P_{\varphi,\psi}\left(\mathbf{W,\mathbf{p}}\right)\parallel P_{\mathrm{prior}}\left(\mathbf{W},\mathbf{p}\right)\right) & =D_{KL}\left(P_{\varphi}\left(\mathbf{W}\right)\parallel P_{\mathrm{prior}}\left(\mathbf{W}\right)\right)\label{eq:kl-div}\\
% & +D_{KL}\left(P_{\mathbf{\psi}}\left(\mathbf{p}\right)\parallel P_{\mathrm{prior}}\left(\mathbf{p}\right)\right)\nonumber 
%\end{align}
%}{\footnotesize\par}

Consequently, we can compute
the KL divergence between the approximate posterior distribution and
the prior distribution over DAGs via the sum of the KL divergence
between the variational model and the prior distribution over the
edge matrix $\mathbf{W}$ and the priority scores vector $\mathbf{p}$.

\textbf{Variational Families: }For the distribution over the priority
scores vector $\mathbf{p}$, we opt for the isotropic Gaussian, i.e.,
$\mathbf{p}\sim\mathcal{N}(\boldsymbol{\mu},\sigma^{2}\mathbf{I})$.
As discussed earlier, we choose the Gumbel-Softmax distribution \citep{jang2017categorical}
for the variational model of the edge matrix $\mathbf{W}$. These
choices enable us to utilize the pathwise gradient, offering a lower
variance approach compared to the score-function method \citep{mohamed2020montecarlo}.
To compute the gradient, we leverage the straight-through estimator
\citep{bengio2013estimating}: for $\mathbf{p}$, we use the rounded
value of $\mathrm{sigmoid}\left(\frac{\mathrm{grad}(\mathbf{p})}{t}\right)$
in the forward pass, and its continuous value in the backward pass.
For $\mathbf{W}$, we use the discrete value $W_{ij}=\argmax\left[1-\hat{W}_{ij},\hat{W}_{ij}\right]$
in the forward pass, and the continuous approximation $\hat{W}_{ij}$
in the backward pass. 

\textbf{Prior Distribution: }A well-chosen prior encapsulates existing
knowledge about the model parameters, thereby guiding the inference
process. In line with the belief in the sparsity of causal DAGs, we
set a small prior $P_{\mathrm{prior}}\left(W_{ij}\right)$ on the
edge probability. For an effective gradient estimation, we define
the prior distribution of the priority scores vector as a normal distribution
with a mean of zero and a small variance. 

Incorporating all the above design choices, the final loss can be
expressed as follows: 
\begin{align}
\mathrm{\max_{\theta,\varphi,\psi}}\mathcal{L} & =-\sum\left(X_{ij}-\hat{X}_{ij}\right)^{2}\label{eq:final_loss}\\
 & -D_{KL}\left(P_{\varphi}\left(\mathbf{W}\right)\parallel P_{\mathrm{prior}}\left(\mathbf{W}\right)\right)\nonumber \\
 & -D_{KL}\left(P_{\mathbf{\psi}}\left(\mathbf{p}\right)\parallel P_{\mathrm{prior}}\left(\mathbf{p}\right)\right),\nonumber 
\end{align}
where $\hat{X_{i}}=f_{i,\theta}(\mathbf{A}_{i}\circ\mathbf{X})$ is
the reconstructed data from the sampled DAG. In the implementation, we
divide the total loss by the number of nodes $d$ for stable numerical
optimization. The training process of the proposed approach are summarized
in Algorithm~\ref{alg:vi-training}.

\begin{figure*}[tbh]
\begin{minipage}[t]{0.33\textwidth}%
\subfloat{\includegraphics[width=1.4\textwidth]{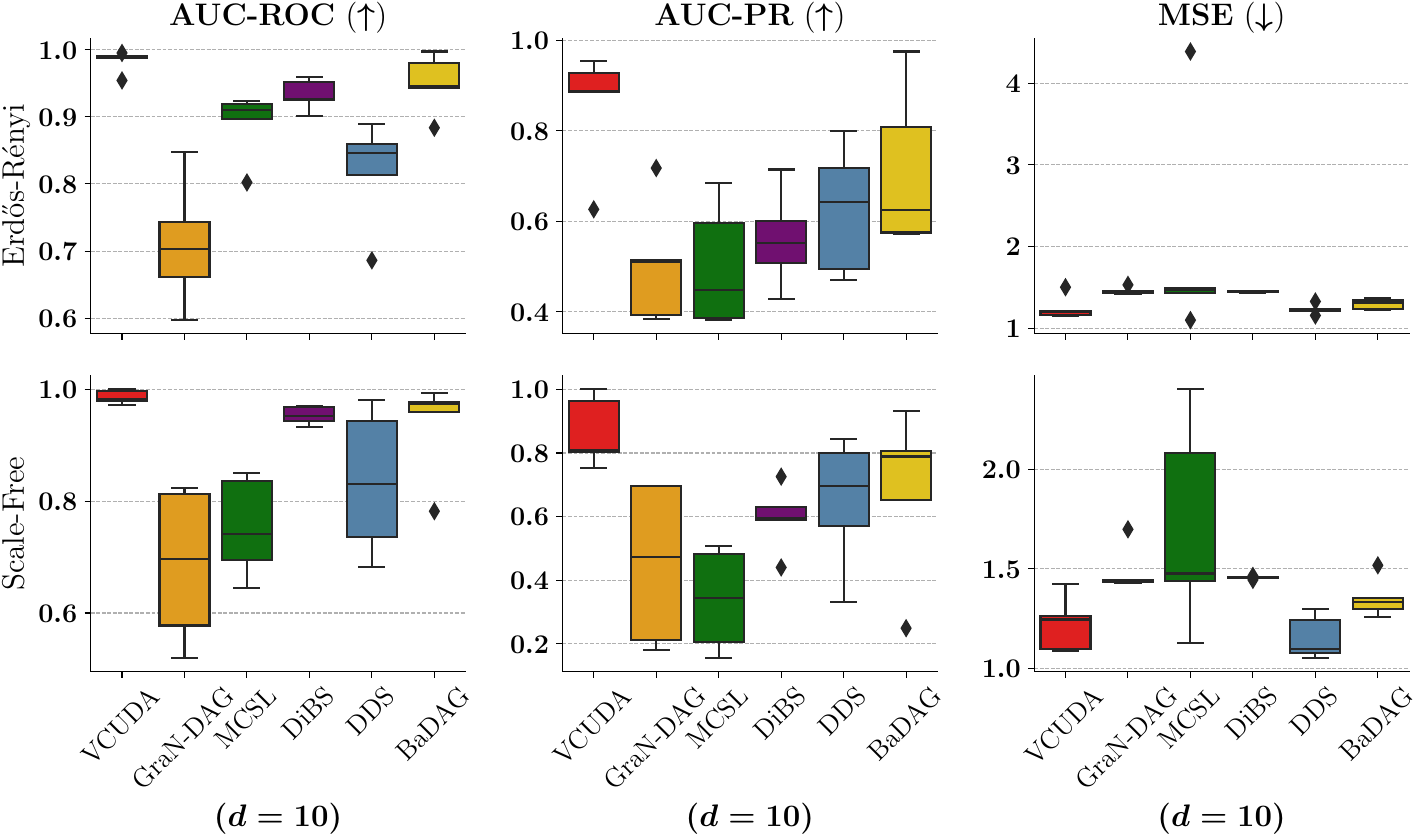}}%
\end{minipage}\hspace{3cm}%
\begin{minipage}[t]{0.33\textwidth}%
\subfloat{\includegraphics[width=1.4\textwidth]{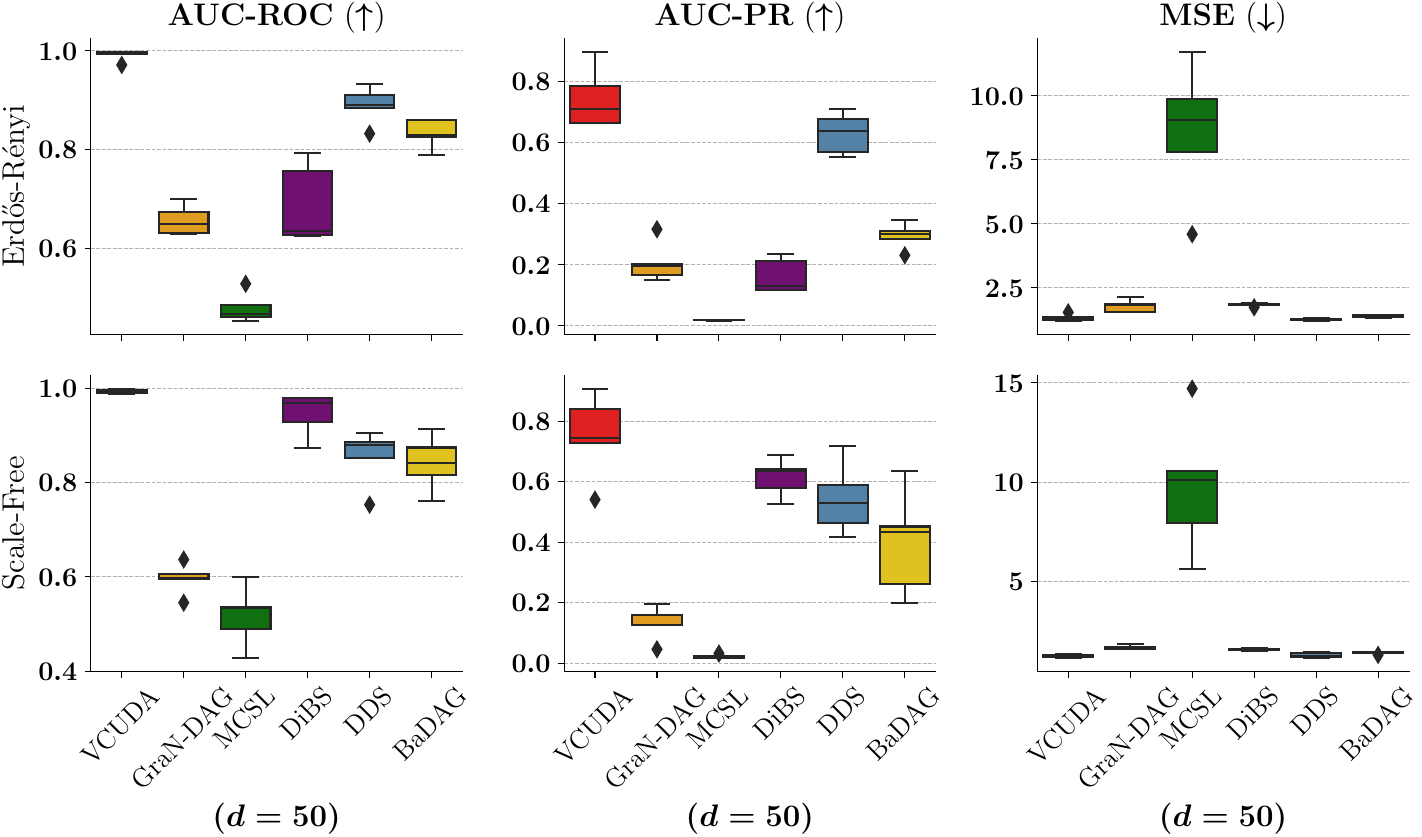}}%
\end{minipage}
\caption{Performance on synthetic data generated
from linear Gaussian models with $d=10$ and $d=50$ variables of different graph models. The reported values are aggregated from 10 independent runs. VCUDA achieves the best results across most metrics and outperforms other Bayesian approaches (DiBS and DDS). $\downarrow$
denotes lower is better and $\uparrow$ denotes higher is better.}
\label{fig:linear-synthetic}
\end{figure*}

\begin{figure*}
\begin{minipage}[t]{0.33\textwidth}%
\subfloat{\includegraphics[width=1.4\textwidth]{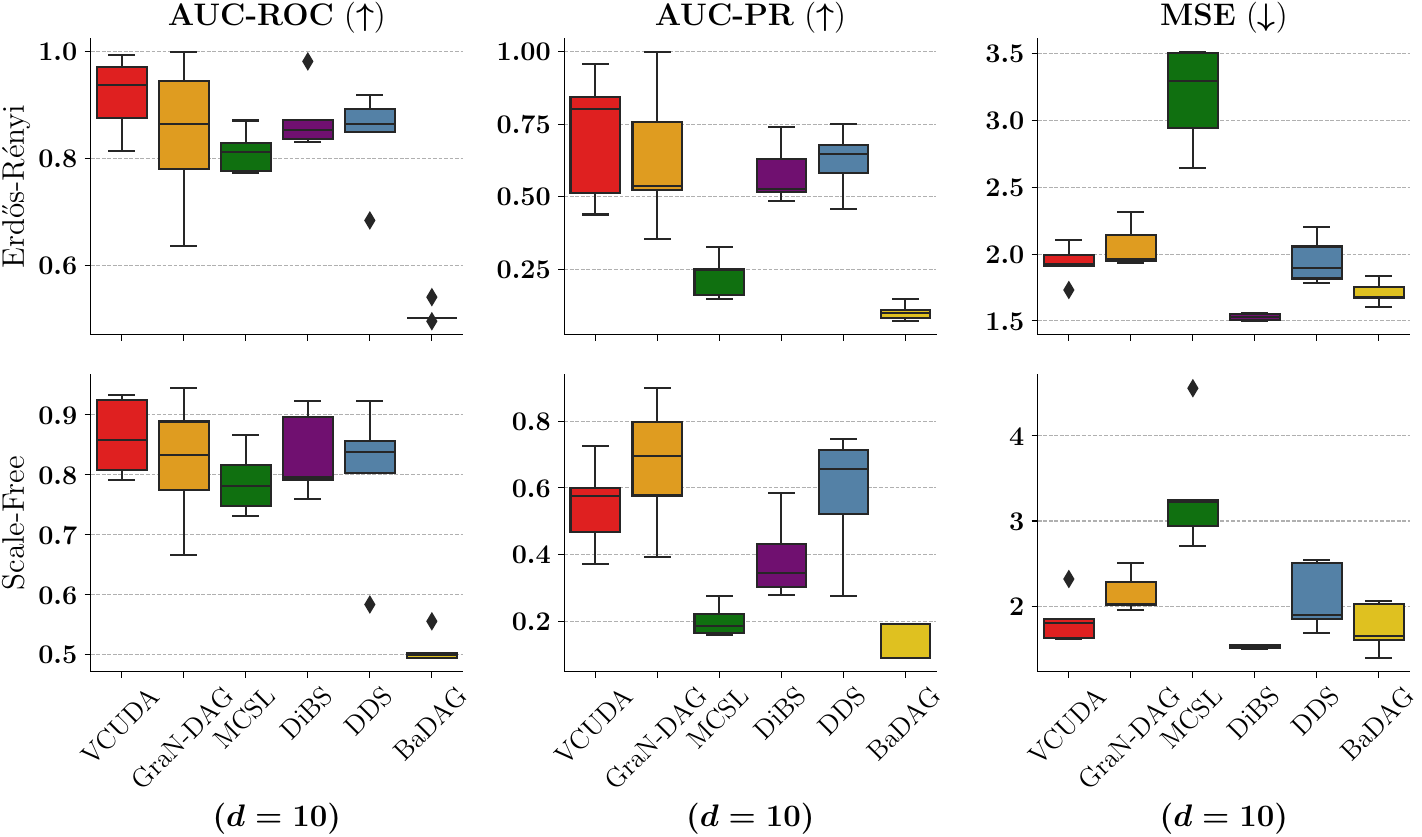}}%
\end{minipage}\hspace{3cm}%
\begin{minipage}[t]{0.33\textwidth}%
\subfloat{\includegraphics[width=1.4\textwidth]{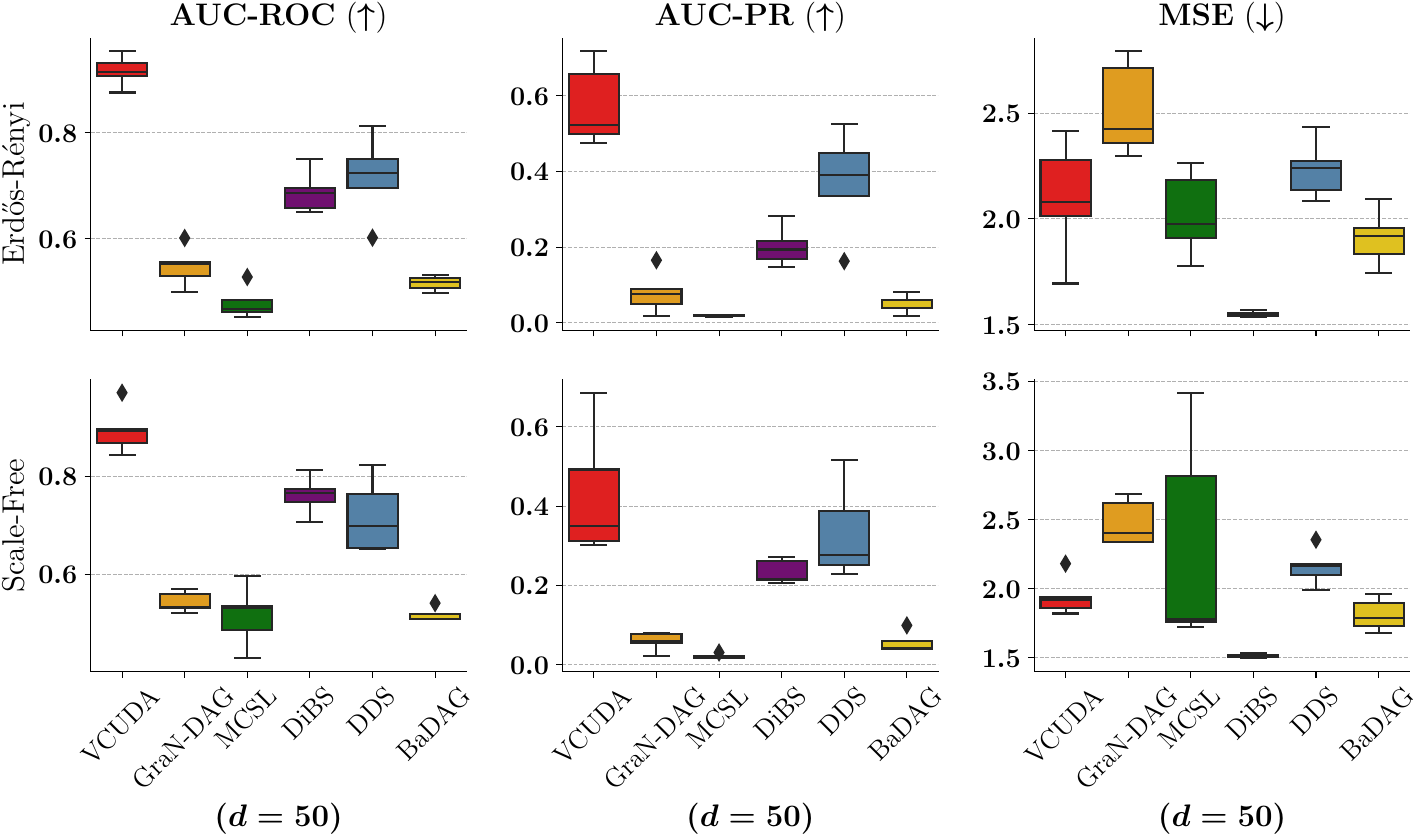}}%
\end{minipage}
\caption{Performance on synthetic data generated
from nonlinear Gaussian models with $d=10$ and $d=50$ variables of different graph models. The reported values are aggregated from 10 independent runs. Our proposed approach VCUDA achieves the best results across most metrics and outperforms other Bayesian based approaches
(DiBS and DDS). $\downarrow$ denotes lower is better and $\uparrow$ denotes higher is better.}
\label{fig:nonlinear-synthetic}
\end{figure*}

\section{Experiment\label{sec:Experiment}}

In this section, we demonstrate extensive experiments showing the
empirical performance of our proposed method on DAG sampling and DAG
structure learning tasks. 

%The experiment details are discussed in Appendix~\ref{sec:Details-of-Experiment}. 

\textbf{Baselines. }Regarding DAG sampling, we compare the proposed
method with the Gumbel-Sinkhorn and Gumbel-Top-k approaches \citep{charpentier2022differentiable}
in terms of sampling time for thousands of variables. Unlike our method, which samples a priority score vector, these approaches focus on sampling a permutation matrix. The Gumbel-Sinkhorn method \cite{mena2018learning} leverages the Sinkhorn operator to approximate the distribution over the permutation matrix. Meanwhile, Gumbel-Top-k combines the Gumbel-Top-k distribution \cite{kool2019stochastic} with the Soft-Sort operator \cite{prillo2020softsort} to achieve faster sampling compared to Gumbel-Sinkhorn. Regarding DAG structure learning, we focus our comparison on differentiable
methods and hence select five state-of-the-art causal discovery methods that belong to both point-estimations and Bayesian based baselines:
\begin{itemize}
\item \textbf{GraN-DAG} \citep{lachapelle2020gradientbased} utilizes the product
of neural network computation path as a proxy for the adjacency matrix and NOTEARS \citep{zheng2018dagswith}
for the DAG constraint.\footnote{\url{https://github.com/kurowasan/GraN-DAG}}

\item \textbf{Masked-DAG} \citep{ng2022maskedgradientbased} leverages Gumbel-Sigmoid
to parameterize the binary adjacency matrix and NOTEARS \citep{zheng2018dagswith}
to impose the DAG constraint. \footnote{\url{https://github.com/huawei-noah/trustworthyAI}}
 
\item \textbf{DiBS} \citep{lorch2021dibsdifferentiable} models the Bayesian causal
structure problem from the latent space of a probabilistic graph representation
and employs Stein variational gradient descent to solve the problem.
The study also exploits NOTEARS \citep{zheng2018dagswith} to impose the
DAG constraint on the latent space.\footnote{\url{https://github.com/larslorch/dibs}}

\item \textbf{DDS} \citep{charpentier2022differentiable} introduces a differentiable
DAG sampling via sampling an edge matrix and a permutation matrix, which is integrated with a variational inference
model to solve Bayesian causal structure learning. \footnote{\url{https://github.com/sharpenb/Differentiable-DAG-Sampling}}

\item \textbf{BaDAG} \citep{annadani2023bayesdag} combines both stochastic gradient Markov Chain Monte Carlo and variational inference for Bayesian causal discovery. The study leverages No-Curl constraint and Sinkhorn algorithm to sample DAGs. \footnote{\url{https://github.com/microsoft/Project-BayesDAG}}

\end{itemize}
We obtain the original implementations and the recommended hyper-parameters
for these baselines. 

\textbf{Datasets. }We benchmark these methods on both synthetic and
real datasets. For synthetic datasets, we closely follow \citep{lachapelle2020gradientbased,ng2022maskedgradientbased,charpentier2022differentiable}.
For generating causal DAGs, we consider Erd\H{o}s-Rényi (ER) and scale-free
(SF) network models with average degree equal to 1. We vary the graph
size in terms of number of nodes $d=\{10,50,100\}$ and consider both
linear and nonlinear Gaussian SEMs. For linear model, we generate
a weighted adjacency matrix $\mathbf{W}\in\mathbb{R}^{d\times d}$
with edges' weights randomly sampled from $\mathcal{U}\left(\left[-2,-0.5\right]\cup\left[0.5,2\right]\right)$.
We then generate the data $\mathbf{X}\in\mathbb{R}^{n\times d}$ following
the linear SEM: $\mathbf{X}=\mathbf{W}^{T}\mathbf{X}+\epsilon$, where
$\epsilon\sim\mathcal{N}\left(0,1\right)$. For nonlinear model, we
generate the data following $X_{i}=f_{i}\left(X_{\mathrm{pa}(i)}\right)+\epsilon_{i}$,
where the functional model $f_{i}$ is generated from Gaussian Process
with RBF kernel of bandwidth one and $\epsilon_{i}\sim\mathcal{N}\left(0,1\right)$.
In our experiments, we sample 10 datasets per setting where each dataset
includes a ground truth of the causal DAG's adjacency matrix, a training
dataset of 1,000 samples and a held out testing dataset of 100 samples. For real datasets, we closely
follow \citep{yu2021dagswith,lorch2021dibsdifferentiable,ng2022maskedgradientbased}.
We use Sachs dataset \citep{sachs2005causalproteinsignaling} which
measures the expression level of different proteins and phospholipids
in human cells. The data contains 853 observational samples generated
from a protein interaction network of 11 nodes and 17 edges. 

\textbf{Evaluation metrics. }We use the area under the curve of precision-recall
(AUC-PR) and the area under the receiver operating characteristic
curve (AUC-ROC) between the ground-truth binary adjacency matrix and
the output scores matrix, denoted as $\mathbf{S}$, where $S_{ij}$
represents the possibility of the presence of an edge from $X_{i}$
to $X_{j}$. For point estimation methods, we get the scores matrix
from the output before thresholding. For Bayesian-based methods, we
get the scores matrix by averaging 100 sampled binary adjacency matrix
from the learned probabilistic DAG model. We also evaluate the learned
functional model $f_{\theta}\left(.\right)$ by computing the mean
squared error (MSE) between the ground-truth node value $X_{i}$ and
the estimated node values $\hat{X}_{i}=f_{i,\theta}(\mathbf{A_{i}}\circ\mathbf{X})$
on a held-out dataset.

\textbf{Hyperparameters.} We use a neural network with one hidden
layer and ReLU activation to parameterize the functional models $f_{i}$ in the nonlinear setting and real-world dataset. We perform a random search
over the learning rate $lr\in\left[10^{-3},10^{-1}\right]$. The prior edge probability and scale of the priority scores vector are set to 0.01 and 0.1, respectively. Based on our ablation study (refer to Section \ref{sec:abl}), a temperature of 0.3 is chosen for its benefits. To prevent overfiting, VCUDA is trained by the Adam optimizer with the l2-regularization of $1e-4$. Furthermore, early stopping is employed with validation loss checks every 10 epochs. We find that the model is convergent within 500 epochs. 
%We refer to Appendix \ref{} for the details of hyperparameters for different experimental settings.

\begin{figure*}[t]
\begin{minipage}[t]{0.33\textwidth}%
\subfloat[Linear model]{\includegraphics[width=1.4\textwidth]{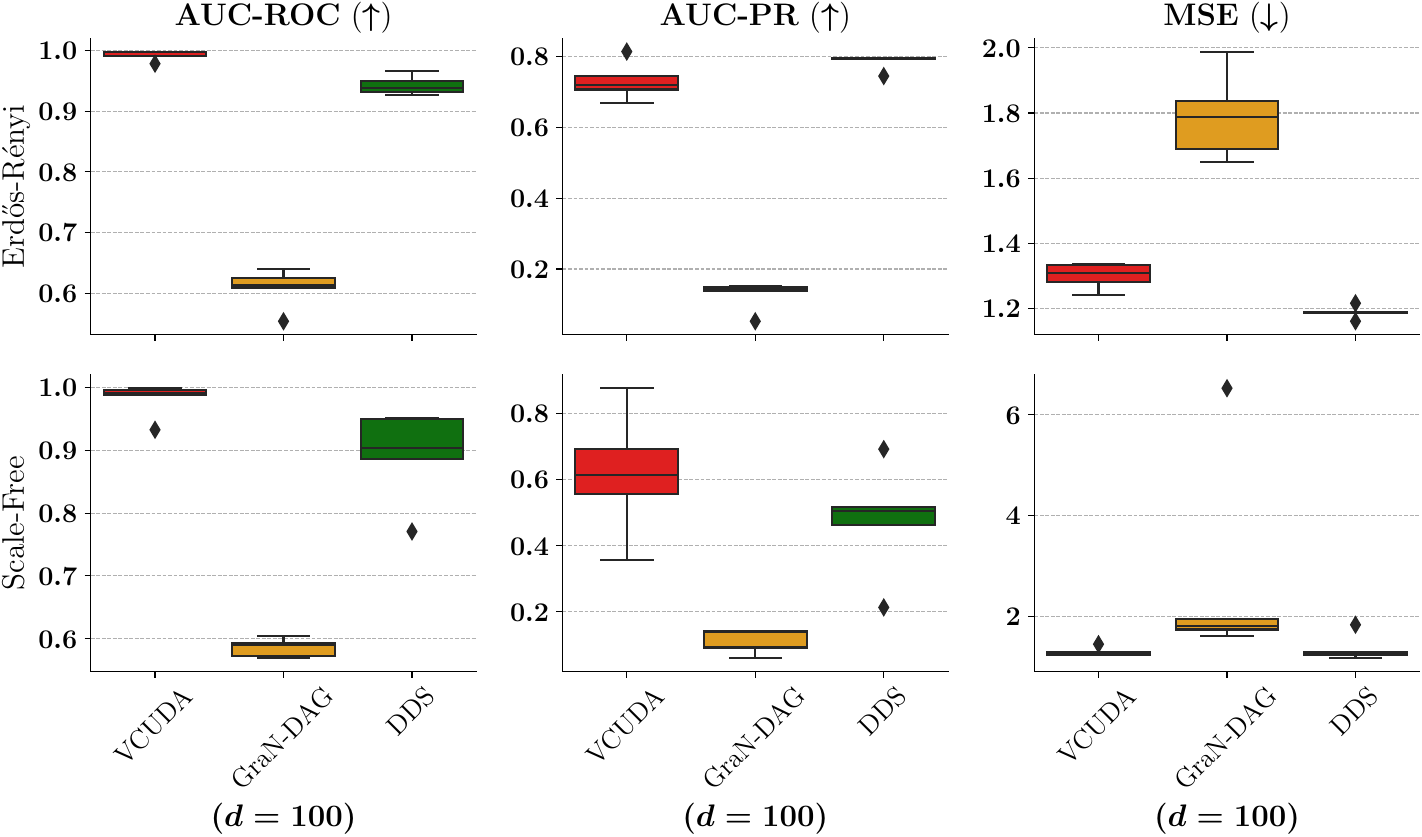}}%
\end{minipage}\hspace{3cm}%
\begin{minipage}[t]{0.33\textwidth}%
\subfloat[Nonlinear model]{\includegraphics[width=1.4\textwidth]{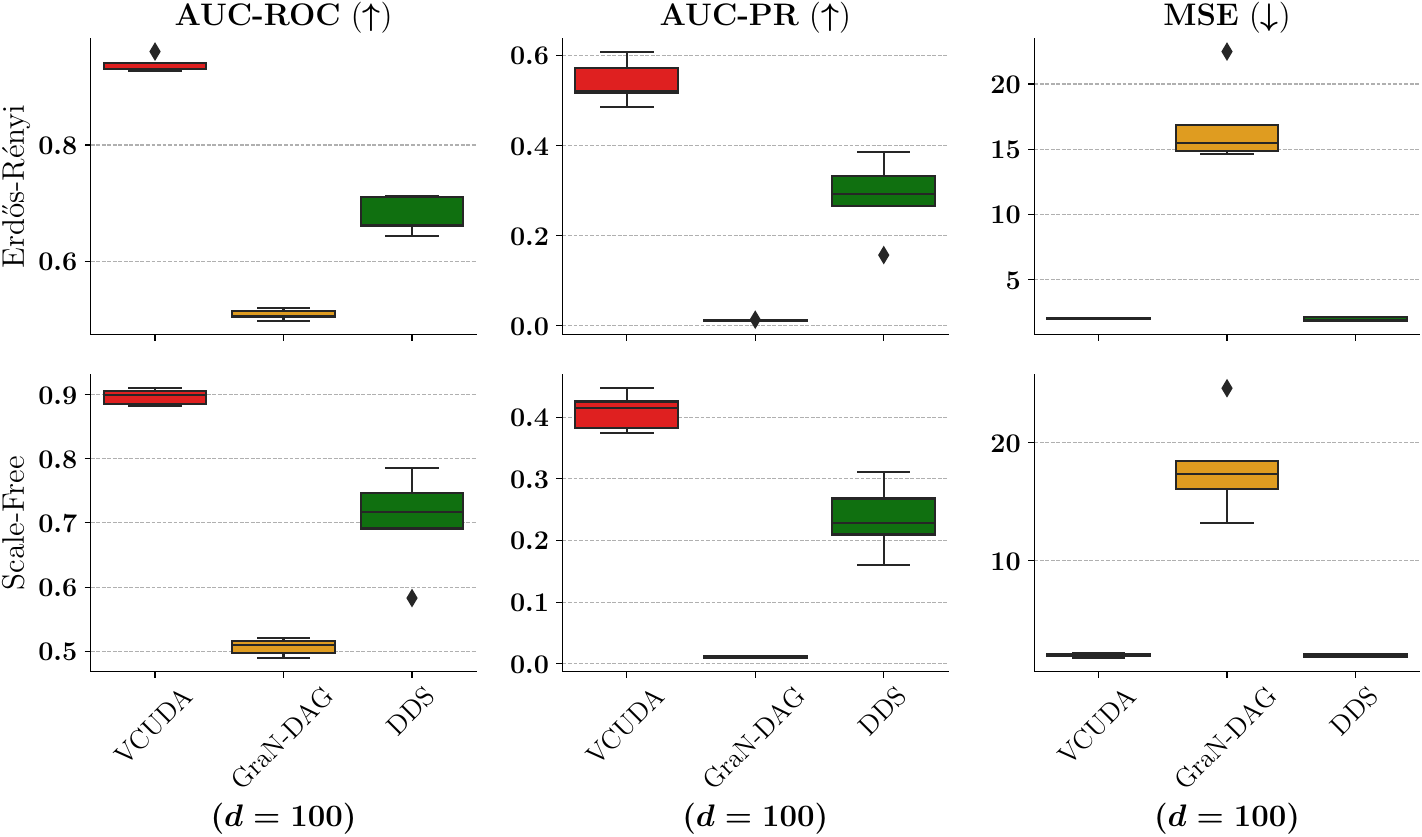}}%
\end{minipage}
\caption{Performance on high dimensional data with $d=100$ for different graphs and causal functional models. The reported values are aggregated from 10 independent runs. Our proposed approach VCUDA achieves the best results across most metrics. $\downarrow$ denotes lower is better and $\uparrow$ denotes higher is better.\newline}
\label{fig:hidimen}
\end{figure*}

\subsection{DAG sampling}

We compare our DAG sampling model to two well-known
models, including Gumbel-Sinkhorn and Gumbel-Top-k \citep{charpentier2022differentiable} on large-scale DAGs sampling. The results are shown in the Appendix \cite{hoang2024scalable}, indicating superiority in running time of our proposed model compared to the others, paving a road for scalable Bayesian causal discovery. More importantly, our sampling method achieves a consistently high performance when integrated into the variational framework to infer the DAGs' posterior distribution, which will be shown in the following section. 

%For all models, we
%measured the sampling time on a single GPU (NVIDIA Tesla V100, 32
%GB VRAM memory) across 10 independent runs.
%Furthermore, we also demonstrate the results of differentiable sampling
%optimization using our proposed model and Gumbel-Top-k for DAG learning
%when the ground truth DAG is available, which are detailed in Appendix~\ref{sec:Additional-Results}. 
%Especially, our model accomplishes the
%DAG sampling process for a DAG of 5,000 nodes in less than 0.1 seconds, which is approximately 34 times faster than Gumbel-Sinkhorn.
%
\subsection{DAG structure learning}

\textbf{Synthetic datasets.} We present the results of DAG structure
learning on ER/SF graphs with $d=\{10,50\}$ for both linear and nonlinear
models in Figure~\ref{fig:linear-synthetic} and Figure~\ref{fig:nonlinear-synthetic},
respectively. The results exhibit the superior performance of VCUDA
across all settings, particularly in terms of AUC-ROC. Specifically,
the AUC-ROC values of VCUDA remain consistently high, always surpassing
0.9 for linear models and 0.8 for nonlinear models. In contrast, the
AUC-ROC values of the other baselines show volatility depending on
the settings. Additionally, VCUDA achieves a low MSE measures, outperforming
most baselines including GranDAG, MCSL, and DDS across all settings.
In comparison to DiBS, the MSE measures of VCUDA are comparable for
both linear and nonlinear settings. We observe a notably superior
performance of Bayesian baselines compared to point-estimation baselines,
especially in scenarios with higher dimensions (e.g., $d=50$). This
disparity in performance can be attributed to the Bayesian methods'
capacity to capture the uncertainty effectively. For more results on denser graphs, we refer to the Appendix \cite{hoang2024scalable}.

Furthermore, we study the performance of VCUDA and baseline methods on the high
dimensional causal graph with $d=100$ for both linear and nonlinear
models. We find that DiBS is computationally excessive producing an
error in the benchmarking device, while the running time results of MCSL and BaDAG exceeded
our defined time limit of 1 hour for each dataset. Therefore, Figure~\ref{fig:hidimen}
visualizes the results of VCUDA, GraNDAG, and DDS. Among these models, VCUDA shows a
consistent outperformance compared to other baselines despite the
challenge of high dimensional problems. 

%We assess the running times of all methods in the ER-nonlinear setting
%with varying the number of nodes. The results are presented in Figure~\ref{fig:training-time}
%in the appendix. The results reveal that VCUDA demonstrates significantly
%faster running times compared to GraN-DAG, MCSL, and DiBS, particularly
%as the number of nodes increases. This observation can be explained
%by the additional computational burden imposed on these methods due
%to the need to assess the DAG-ness constraint (e.g., No-TEARS) throughout
%the process. Compare to DDS which exploits Gumbel-Top-k for DAG sampling,
%VCUDA shows a slightly slower running process although the sampling
%time of VCUDA is competitive to Gumbel-Top-k. Despite this, VCUDA
%shows a substantial improvement in AUC-ROC compared to DDS, highlighting
%its efficacy in capturing the underlying causal structure. Consequently,
%VCUDA offers a better solution with a negligible increase in time.
%Importantly, increasing the number of training iterations in DDS does
%not yield significant improvements in overall results.

\begin{table}
\begin{centering}
\caption{Performance on a real dataset of the protein signaling network. We report the mean $\pm$ std of AUC-ROC and AUC-PR metrics. Results are averaged over 10 different restarts. $\uparrow$ denotes higher is better. We highlight in bold the \textbf{best} result and in underline the \underline{second best} result. VCUDA achieves the best AUC-ROC and the second best AUC-PR.}
\label{tab:real-datasets}
\begin{tabular}{ccc}
\toprule 
Method & AUC-ROC $\left(\uparrow\right)$ & AUC-PR $\left(\uparrow\right)$\tabularnewline
\midrule
%GranDAG & $0.57\pm{\scriptstyle 0.02}$ & $0.26\pm{\scriptstyle 0.03}$\tabularnewline
%MCSL & $0.58\pm{\scriptstyle 0.04}$ & $0.21\pm{\scriptstyle 0.03}$\tabularnewline
%DiBS & $0.66\pm{\scriptstyle 0.03}$ & $\mathbf{0.34\pm{\scriptstyle 0.07}}$\tabularnewline
%DDS & $0.43\pm{\scriptstyle 0.02}$ & $0.16\pm{\scriptstyle 0.04}$\tabularnewline
%VCUDA & \textbf{$\mathbf{0.68\pm{\scriptstyle 0.03}}$} & $0.27\pm{\scriptstyle 0.02}$\tabularnewline
GraNDAG & $0.57\pm{\scriptstyle 0.02}$ & $0.26\pm{\scriptstyle 0.03}$\tabularnewline
MCSL    & $0.58\pm{\scriptstyle 0.04}$ & $0.21\pm{\scriptstyle 0.03}$\tabularnewline
DiBS    & $\underline{0.66\pm{\scriptstyle 0.03}}$ & $\mathbf{0.34\pm{\scriptstyle 0.07}}$ \tabularnewline
DDS     & $0.43\pm{\scriptstyle 0.02}$ & $0.16\pm{\scriptstyle 0.04}$\tabularnewline
BaDAG     & $0.48\pm{\scriptstyle 0.01}$ & $0.17\pm{\scriptstyle 0.02}$\tabularnewline
\midrule 
VCUDA (Ours)   & $\mathbf{0.71\pm{\scriptstyle 0.04}}$ & $\underline{0.32\pm{\scriptstyle 0.05}}$\tabularnewline
\bottomrule
\end{tabular}
\par\end{centering}
\end{table}

\textbf{Real datasets. }Table~\ref{tab:real-datasets} displays the
AUC-ROC and AUC-PR metrics on the real dataset of the protein signaling
network. It is crucial to acknowledge a notable model misspecification
in real-world data, given that the data might not adhere to the additive
noise model assumption. Nevertheless, VCUDA achieves the best AUC-ROC
and the second-best AUC-PR, which underscores the adaptability of
VCUDA in navigating the intricate of real-world scenarios. 

\begin{figure}[t]
\includegraphics[width=1\columnwidth]{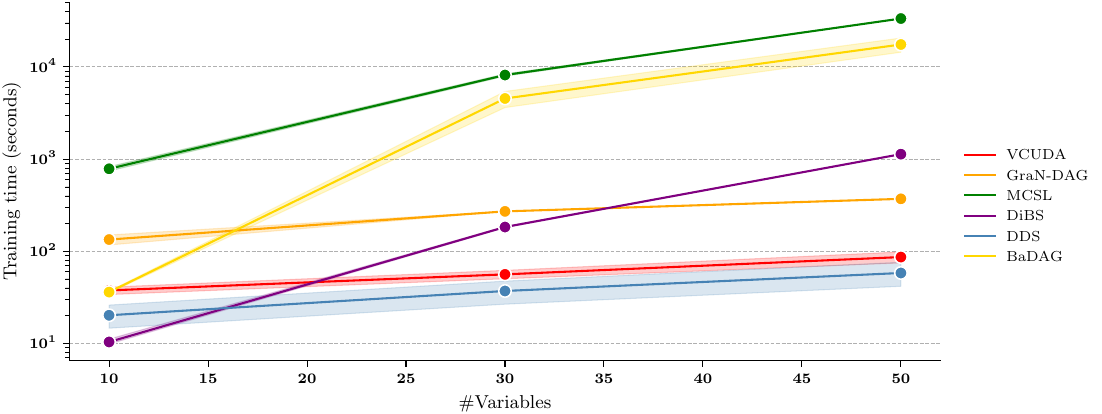}
\caption{The running time for causal discovery on synthetic datasets generated from a nonlinear model and ER graphs with ${d=[10,30,50]}$. VCUDA runs faster than 3 of 4 baselines, especially in high dimensions. \newline \vspace{2mm}}
\label{fig:training-time}
\end{figure}

\begin{figure}[h]
\includegraphics[width=1\columnwidth]{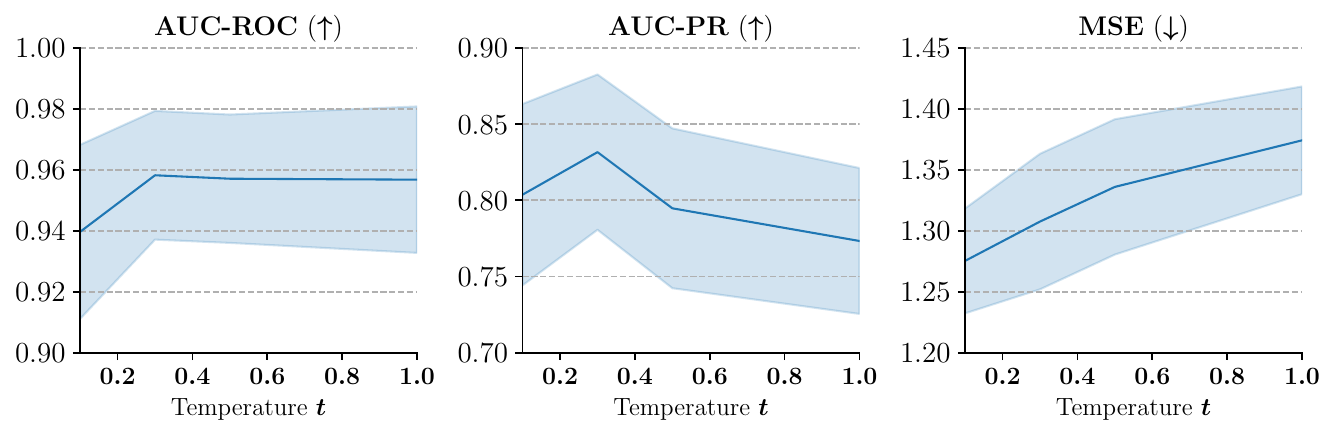}
\caption{The performance of VCUDA on different values of temperature $t$. The numerical results are obtained from 10 random datasets generated from a linear SEM with both ER and SF graph models.\newline\vspace{2mm}}
\label{fig:ablation_temp}
\end{figure}

\textbf{Running time.} We assess the running times of all methods in the ER-nonlinear setting
with varying the number of nodes. The results are presented in Figure \ref{fig:training-time}.
The results reveal that VCUDA demonstrates significantly faster running
times compared to GraN-DAG, MCSL, and DiBS, particularly as the number
of nodes increases. This observation can be explained by the additional
computational burden imposed on these methods due to the need to assess
the DAG-ness constraint (e.g., No-TEARS) throughout the process. 
While VCUDA's runtime is competitive with DDS using Gumbel-Top-k for DAG sampling, it achieves a substantial improvement in AUC-ROC, highlighting its efficacy in capturing the underlying causal structure. This translates to VCUDA being a superior solution with negligible runtime increase. Notably, even with more training iterations, DDS fails to significantly improve its overall performance.

\textbf{Ablation Study. }\label{sec:abl} We investigate the impact of temperature $t$ on the performance of VCUDA by evaluating different values within the range \{0.1, 0.3, 0.5, 1.0\}. As shown in Figure \ref{fig:ablation_temp}, lower temperatures can improve VCUDA's performance metrics, including AUC-ROC, AUC-PR, and MSE. Based on these results, we chose a temperature of 0.3 for all the experiments due to its consistent performance.

\section{Conclusion}

We introduce VCUDA, a scalable approach for Bayesian causal discovery
from observational data. By eliminating explicit acyclicity constraints,
we propose a differentiable approach for DAGs sampling, enabling fast
generation of large DAGs within seconds. In addition, the efficient
sampling model enhances Bayesian inference for causal structure models
when integrated into the variational inference framework. Extensive
experiments on synthetic and real datasets showcase VCUDA's superior
performance, outpacing other baselines in terms of multiple metrics,
all achieved with remarkable efficiency.

%%%%%%%%%%%%%%%%%%%%%%%%%%%%%%%%%%%%%%%%%%%%%%%%%%%%%%%%%%%%%%%%%%%%%%%%

%%% Use this command to include your bibliography file.

\bibliography{references}

\begin{thebibliography}{44}
\providecommand{\natexlab}[1]{#1}
\providecommand{\url}[1]{\texttt{#1}}
\expandafter\ifx\csname urlstyle\endcsname\relax
  \providecommand{\doi}[1]{doi: #1}\else
  \providecommand{\doi}{doi: \begingroup \urlstyle{rm}\Url}\fi

\bibitem[Agrawal et~al.(2019)Agrawal, Squires, Yang, Shanmugam, and Uhler]{agrawal2019abcdstrategy}
R.~Agrawal, C.~Squires, K.~Yang, K.~Shanmugam, and C.~Uhler.
\newblock {ABCD}-strategy: Budgeted experimental design for targeted causal structure discovery.
\newblock In \emph{Proceedings of the {International} {Conference} on {Artificial} {Intelligence} and {Statistics}}, pages 3400--3409, 2019.

\bibitem[Annadani et~al.(2023)Annadani, Pawlowski, Jennings, Bauer, Zhang, and Gong]{annadani2023bayesdag}
Y.~Annadani, N.~Pawlowski, J.~Jennings, S.~Bauer, C.~Zhang, and W.~Gong.
\newblock {BayesDAG}: Gradient-based posterior inference for causal discovery.
\newblock In \emph{Advances in Neural Information Processing Systems}, 2023.

\bibitem[Atanackovic et~al.(2023)Atanackovic, Tong, Wang, Lee, Bengio, and Hartford]{atanackovic2023dyngfntowards}
L.~Atanackovic, A.~Tong, B.~Wang, L.~Lee, Y.~Bengio, and J.~Hartford.
\newblock {DynGFN}: Towards {Bayesian} inference of gene regulatory networks with {GFlowNets}.
\newblock In \emph{Advances in Neural Information Processing Systems}, 2023.

\bibitem[Bengio et~al.(2013)Bengio, L{\'e}onard, and Courville]{bengio2013estimating}
Y.~Bengio, N.~L{\'e}onard, and A.~Courville.
\newblock Estimating or propagating gradients through stochastic neurons for conditional computation.
\newblock \emph{arXiv preprint arXiv:1308.3432}, 2013.

\bibitem[B{\"u}hlmann et~al.(2014)B{\"u}hlmann, Peters, and Ernest]{buhlmann2014camcausal}
P.~B{\"u}hlmann, J.~Peters, and J.~Ernest.
\newblock {CAM}: {Causal} additive models, high-dimensional order search and penalized regression.
\newblock \emph{The Annals of Statistics}, 42:\penalty0 2526--2556, 2014.

\bibitem[Charpentier et~al.(2022)Charpentier, Kibler, and G{\"u}nnemann]{charpentier2022differentiable}
B.~Charpentier, S.~Kibler, and S.~G{\"u}nnemann.
\newblock Differentiable {DAG} sampling.
\newblock In \emph{Proceedings of the International Conference on Learning Representations}, 2022.

\bibitem[Chickering et~al.(2004)Chickering, Heckerman, and Meek]{chickering2004largesample}
D.~M. Chickering, D.~Heckerman, and C.~Meek.
\newblock Large-sample learning of {Bayesian} networks is {NP}-hard.
\newblock \emph{Journal of Machine Learning Research}, 5:\penalty0 1287--1330, 2004.

\bibitem[Colombo and Maathuis(2014)]{colombo2014orderindependent}
D.~Colombo and M.~Maathuis.
\newblock Order-independent constraint-based causal structure learning.
\newblock \emph{Journal of Machine Learning Research}, 15:\penalty0 3741--3782, 2014.

\bibitem[Cundy et~al.(2021)Cundy, Grover, and Ermon]{cundy2021bcdnets2}
C.~Cundy, A.~Grover, and S.~Ermon.
\newblock {BCD} nets: {Scalable} variational approaches for {Bayesian} causal discovery.
\newblock In \emph{Advances in Neural Information Processing Systems}, volume~34, pages 7095--7110, 2021.

\bibitem[Deleu et~al.(2022)Deleu, G{\'o}is, Emezue, Rankawat, Lacoste-Julien, Bauer, and Bengio]{deleu2022bayesian}
T.~Deleu, A.~G{\'o}is, C.~Emezue, M.~Rankawat, S.~Lacoste-Julien, S.~Bauer, and Y.~Bengio.
\newblock Bayesian structure learning with generative flow networks.
\newblock In \emph{Proceedings of the {Conference} on {Uncertainty} in {Artificial} {Intelligence}}, pages 518--528, 2022.

\bibitem[Haughton(1988)]{haughton1988onthe}
D.~Haughton.
\newblock On the choice of a model to fit data 613 from an exponential family.
\newblock \emph{The Annals of Statistics}, 16:\penalty0 342--355, 1988.

\bibitem[Heckerman et~al.(1995)Heckerman, Geiger, and Chickering]{heckerman1995learning}
D.~Heckerman, D.~Geiger, and D.~M. Chickering.
\newblock Learning {Bayesian} networks: The combination of knowledge and statistical data.
\newblock \emph{Machine Learning}, 20:\penalty0 197--243, 1995.

\bibitem[Hoang et~al.(Full version of this paper.)Hoang, Duong, and Nguyen]{hoang2024scalable}
N.~Hoang, B.~Duong, and T.~Nguyen.
\newblock Scalable variational causal discovery unconstrained by acyclicity.
\newblock \emph{arXiv preprint arXiv:2407.04992}, Full version of this paper.

\bibitem[Hoyer et~al.(2008)Hoyer, Janzing, Mooij, Peters, and Sch{\"o}lkopf]{hoyer2008nonlinear}
P.~Hoyer, D.~Janzing, J.~Mooij, J.~Peters, and B.~Sch{\"o}lkopf.
\newblock Nonlinear causal discovery with additive noise models.
\newblock In \emph{Advances in {Neural} {Information} {Processing} {Systems}}, volume~21, 2008.

\bibitem[Huang et~al.(2018)Huang, Zhang, Lin, Sch{\"o}lkopf, and Glymour]{huang2018generalized}
B.~Huang, K.~Zhang, Y.~Lin, B.~Sch{\"o}lkopf, and C.~Glymour.
\newblock Generalized score functions for 625 causal discovery.
\newblock In \emph{Proceedings of the {ACM} {SIGKDD} {International} {Conference} on {Knowledge} {Discovery} \& {Data} {Mining}}, pages 1551--1560, 2018.

\bibitem[Jang et~al.(2017)Jang, Gu, and Poole]{jang2017categorical}
E.~Jang, S.~Gu, and B.~Poole.
\newblock Categorical reparameterization with {Gumbel}-softmax.
\newblock In \emph{Proceedings of the International {Conference} on {Learning} {Representations}}, 2017.

\bibitem[Jonas et~al.(2017)Jonas, Dominik, and Bernhard]{jonas2017elements}
P.~Jonas, J.~Dominik, and S.~Bernhard.
\newblock \emph{Elements of causal inference: foundations and learning algorithms}.
\newblock The MIT Press, 2017.

\bibitem[Kool et~al.(2019)Kool, Hoof, and Welling]{kool2019stochastic}
W.~Kool, H.~V. Hoof, and M.~Welling.
\newblock Stochastic beams and where to find them: The {Gumbel-Top-k} trick for sampling sequences without replacement.
\newblock In \emph{Proceedings of the {International} {Conference} on {Machine} {Learning}}, pages 3499--3508, 2019.

\bibitem[Kuhn(1955)]{kuhn1955thehungarian}
H.~W. Kuhn.
\newblock The {Hungarian} method for the assignment problem.
\newblock \emph{Naval Research Logistics Quarterly}, 2:\penalty0 83--97, 1955.

\bibitem[Kuipers et~al.(2022)Kuipers, Suter, and Moffa]{kuipers2022efficient}
J.~Kuipers, P.~Suter, and G.~Moffa.
\newblock Efficient sampling and structure learning of bayesian networks.
\newblock \emph{Journal of Computational and Graphical Statistics}, 31:\penalty0 639--650, 2022.

\bibitem[Lachapelle et~al.(2020)Lachapelle, Brouillard, Deleu, and Lacoste-Julien]{lachapelle2020gradientbased}
S.~Lachapelle, P.~Brouillard, T.~Deleu, and S.~Lacoste-Julien.
\newblock Gradient-based neural dag learning.
\newblock In \emph{Proceedings of the International Conference on Learning Representations}, 2020.

\bibitem[Lee et~al.(2019)Lee, Danieletto, Miotto, Cherng, and Dudley]{lee2019scaling}
H.-C. Lee, M.~Danieletto, R.~Miotto, S.~Cherng, and J.~Dudley.
\newblock Scaling structural learning with {NO}-{BEARS} to infer causal transcriptome networks.
\newblock In \emph{Pacific Symposium On Biocomputing}, pages 391--402. 2019.

\bibitem[Lorch et~al.(2021)Lorch, Rothfuss, Sch{\"o}lkopf, and Krause]{lorch2021dibsdifferentiable}
L.~Lorch, J.~Rothfuss, B.~Sch{\"o}lkopf, and A.~Krause.
\newblock {DiBS}: {Differentiable} {Bayesian} structure learning.
\newblock In \emph{Advances in {Neural} {Information} {Processing} {Systems}}, volume~34, pages 24111--24123, 2021.

\bibitem[Massidda et~al.(2024)Massidda, Landolfi, Cinquini, and Bacciu]{massidda2023constraintfree}
R.~Massidda, F.~Landolfi, M.~Cinquini, and D.~Bacciu.
\newblock Constraint-free structure learning with smooth acyclic orientations.
\newblock In \emph{{Proceedings} of the {International} {Conference} on {Learning} {Representations}}, 2024.

\bibitem[Mena et~al.(2018)Mena, Belanger, Linderman, and Snoek]{mena2018learning}
G.~Mena, D.~Belanger, S.~Linderman, and J.~Snoek.
\newblock Learning latent permutations with {Gumbel}-{Sinkhorn} networks.
\newblock In \emph{In Proceedings of the International {Conference} on {Learning} {Representations}}, 2018.

\bibitem[Mohamed et~al.(2020)Mohamed, Rosca, Figurnov, and Mnih]{mohamed2020montecarlo}
S.~Mohamed, M.~Rosca, M.~Figurnov, and A.~Mnih.
\newblock Monte {Carlo} gradient estimation in machine learning.
\newblock \emph{Journal of Machine Learning Research}, 21:\penalty0 132:5183--132:5244, 2020.

\bibitem[Ng et~al.(2022)Ng, Zhu, Fang, Li, Chen, and Wang]{ng2022maskedgradientbased}
I.~Ng, S.~Zhu, Z.~Fang, H.~Li, Z.~Chen, and J.~Wang.
\newblock Masked gradient-based causal structure learning.
\newblock In \emph{Proceedings of the {SIAM} {International} {Conference} on {Data} {Mining}}, pages 424--432, 2022.

\bibitem[Ogarrio et~al.(2016)Ogarrio, Spirtes, and Ramsey]{ogarrio2016ahybrid}
J.~M. Ogarrio, P.~Spirtes, and J.~Ramsey.
\newblock A hybrid causal search algorithm for latent variable models.
\newblock In \emph{Proceedings of the {International} {Conference} on {Probabilistic} {Graphical} {Models}}, pages 368--379, 2016.

\bibitem[Pearl(2009)]{judea2009causality}
J.~Pearl.
\newblock \emph{Causality: Models, Reasoning, and Inference}.
\newblock Cambridge University Press, 2009.

\bibitem[Peters and B{\"u}hlmann(2014)]{peters2014identifiability}
J.~Peters and P.~B{\"u}hlmann.
\newblock Identifiability of {Gaussian} structural equation models with equal error variances.
\newblock \emph{Biometrika}, 101:\penalty0 219--228, 2014.

\bibitem[Prillo and Eisenschlos(2020)]{prillo2020softsort}
S.~Prillo and J.~Eisenschlos.
\newblock {SoftSort}: {A} continuous relaxation for the argsort operator.
\newblock In \emph{Proceedings of the {International} {Conference} on {Machine} {Learning}}, pages 7793--7802, 2020.

\bibitem[Sachs et~al.(2005)Sachs, Perez, Pe'er, Lauffenburger, and Nolan]{sachs2005causalproteinsignaling}
K.~Sachs, O.~Perez, D.~Pe'er, D.~A. Lauffenburger, and G.~P. Nolan.
\newblock Causal protein-signaling networks derived from multiparameter single-cell data.
\newblock \emph{Science}, 308:\penalty0 523--529, 2005.

\bibitem[Shimizu et~al.(2006)Shimizu, Hoyer, Hyv{\"a}rinen, and Kerminen]{shimizu2006alinear}
S.~Shimizu, P.~O. Hoyer, A.~Hyv{\"a}rinen, and A.~Kerminen.
\newblock A linear non-{Gaussian} acyclic model for causal discovery.
\newblock \emph{Journal of Machine Learning Research}, 7:\penalty0 2003--2030, 2006.

\bibitem[Spirtes et~al.(2001)Spirtes, Glymour, and Scheines]{spirtes2001causation}
P.~Spirtes, C.~Glymour, and R.~Scheines.
\newblock \emph{Causation, Prediction, and Search}.
\newblock The MIT Press, 2001.

\bibitem[Su and Borsuk(2016)]{su2016improving}
C.~Su and M.~E. Borsuk.
\newblock Improving structure {MCMC} for {Bayesian} networks through {Markov} blanket resampling.
\newblock \emph{Journal of Machine Learning Research}, 17:\penalty0 1--20, 2016.

\bibitem[Tigas et~al.(2023)Tigas, Annadani, Ivanova, Jesson, Gal, Foster, and Bauer]{tigas2023differentiable}
P.~Tigas, Y.~Annadani, D.~R. Ivanova, A.~Jesson, Y.~Gal, A.~Foster, and S.~Bauer.
\newblock Differentiable multi-target causal bayesian experimental design.
\newblock In \emph{Proceedings of the {International} {Conference} on {Machine} {Learning}}, pages 34263--34279, 2023.

\bibitem[Tsamardinos et~al.(2006)Tsamardinos, Brown, and Aliferis]{tsamardinos2006themaxmin}
I.~Tsamardinos, L.~E. Brown, and C.~F. Aliferis.
\newblock The max-min hill-climbing {Bayesian} network structure learning algorithm.
\newblock \emph{Machine Learning}, 65:\penalty0 31--78, 2006.

\bibitem[Viinikka et~al.(2020)Viinikka, Hyttinen, Pensar, and Koivisto]{viinikka2020towards}
J.~Viinikka, A.~Hyttinen, J.~Pensar, and M.~Koivisto.
\newblock Towards scalable bayesian learning of causal {DAGs}.
\newblock In \emph{Advances in {Neural} {Information} {Processing} {Systems}}, volume~33, pages 6584--6594, 2020.

\bibitem[Yu et~al.(2019)Yu, Chen, Gao, and Yu]{yu2019daggnndag}
Y.~Yu, J.~Chen, T.~Gao, and M.~Yu.
\newblock {DAG}-{GNN}: {DAG} structure learning with graph neural networks.
\newblock In \emph{Proceedings of the {International} {Conference} on {Machine} {Learning}}, pages 7154--7163, May 2019.

\bibitem[Yu et~al.(2021)Yu, Gao, Yin, and Ji]{yu2021dagswith}
Y.~Yu, T.~Gao, N.~Yin, and Q.~Ji.
\newblock {DAGs} with {No} {Curl}: An efficient {DAG} structure learning approach.
\newblock In \emph{Proceedings of the {International} {Conference} on {Machine} {Learning}}, pages 12156--12166, 2021.

\bibitem[Zhang et~al.(2011)Zhang, Peters, Janzing, and Sch{\"o}lkopf]{zhang2011kernelbased}
K.~Zhang, J.~Peters, D.~Janzing, and B.~Sch{\"o}lkopf.
\newblock Kernel-based conditional independence test and application in causal discovery.
\newblock In \emph{Proceedings of the {Conference} on {Uncertainty} in {Artificial} {Intelligence}}, pages 804--813, 2011.

\bibitem[Zhang et~al.(2022)Zhang, Ng, Gong, Liu, Abbasnejad, Gong, Zhang, and Shi]{zhang2022truncated}
Z.~Zhang, I.~Ng, D.~Gong, Y.~Liu, E.~M. Abbasnejad, M.~Gong, K.~Zhang, and J.~Q. Shi.
\newblock Truncated matrix power iteration for differentiable {DAG} learning.
\newblock In \emph{{Advances} in {Neural} {Information} {Processing} {Systems}}, 2022.

\bibitem[Zheng et~al.(2018)Zheng, Aragam, Ravikumar, and Xing]{zheng2018dagswith}
X.~Zheng, B.~Aragam, P.~K. Ravikumar, and E.~P. Xing.
\newblock {DAGs} with {NO} {TEARS}: Continuous optimization for structure learning.
\newblock In \emph{Advances in {Neural} {Information} {Processing} {Systems}}, volume~31, 2018.

\bibitem[Zheng et~al.(2020)Zheng, Dan, Aragam, Ravikumar, and Xing]{zheng2020learning}
X.~Zheng, C.~Dan, B.~Aragam, P.~Ravikumar, and E.~Xing.
\newblock Learning sparse nonparametric {DAGs}.
\newblock In \emph{Proceedings of the {International} {Conference} on {Artificial} {Intelligence} and {Statistics}}, pages 3414--3425, 2020.

\end{thebibliography}

\onecolumn
\renewcommand\thesubsection{\Alph{subsection}}

\section*{Appendix for ``Scalable Variational Causal Discovery Unconstrained by Acyclicity''}

\subsection{Theorem 2}\label{sec:Theorem-2}
\begin{theorem}
Let $\mathbb{\mathbf{A}}\in\{0,1\}^{d\times d}$ be an adjacency matrix
of a graph of $d$ nodes. Then, $\mathbf{A}$ is DAG if and only if
there exists corresponding a vector of priority scores $\mathbf{p}\in\mathbb{R}^{d}$
and a binary matrix $\mathbf{W}\in\{0,1\}^{d\times d}$ such that:
\[
\mathbf{A}=\nu\left(\mathbf{W},\mathbf{p}\right)=\mathbf{W}\circ\lim_{t\rightarrow0}\mathrm{sigmoid}\left(\frac{\mathrm{grad}(\boldsymbol{p})}{t}\right)
\]
 where $t>0$ is a strictly positive temperature and $\mathbf{p}$
contains no duplicate elements, i.e. , $p_{i}\neq p_{j}\forall i,j$.
\end{theorem}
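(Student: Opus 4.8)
The plan is to prove the biconditional by first making the limiting topological matrix explicit and then handling each direction separately. Writing $T := \lim_{t\to 0}\mathrm{sigmoid}\left(\mathrm{grad}(\mathbf{p})/t\right)$, I would evaluate this entrywise. Since $\mathrm{grad}(\mathbf{p})_{ij}=p_j-p_i$, for $i\neq j$ the distinctness hypothesis $p_i\neq p_j$ guarantees $p_j-p_i\neq 0$, so $\tfrac{p_j-p_i}{t}$ diverges to $\pm\infty$ as $t\to 0^+$ and the sigmoid converges to $1$ when $p_i<p_j$ and to $0$ when $p_i>p_j$. On the diagonal, $\mathrm{grad}(\mathbf{p})_{ii}=0$ forces $T_{ii}=\mathrm{sigmoid}(0)=\tfrac12$ for every $t$. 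Thus $T$ is the adjacency matrix of the complete graph that orients every edge from the lower-score node to the higher-score node, except for the anomalous diagonal value $\tfrac12$.

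For the forward direction ($\Rightarrow$), I would exploit the fact that every DAG admits a topological order. Given a DAG $\mathbf{A}$, I would fix such an order and define $p_i$ to be the rank of node $i$, so that $\mathbf{p}$ has pairwise distinct entries and $A_{ij}=1$ implies that node $i$ precedes node $j$, hence $p_i<p_j$ and $T_{ij}=1$. Setting $\mathbf{W}=\mathbf{A}$ then gives $(\mathbf{W}\circ T)_{ij}=A_{ij}\,T_{ij}$, which equals $A_{ij}$ on every off-diagonal entry (either $A_{ij}=0$ kills the product, or $A_{ij}=1$ pairs with $T_{ij}=1$) and equals $0=A_{ii}$ on the diagonal since a DAG has no self-loops. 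This yields $\mathbf{A}=\nu(\mathbf{W},\mathbf{p})$.

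For the converse ($\Leftarrow$), I would assume $\mathbf{A}=\mathbf{W}\circ T$ with $\mathbf{W}$ binary, $\mathbf{p}$ distinct, and $\mathbf{A}$ binary, and argue acyclicity directly. Binariness of $\mathbf{A}$ together with $T_{ii}=\tfrac12$ forces $W_{ii}=0$, so $\mathbf{A}$ has zero diagonal and no self-loops. Off the diagonal, $A_{ij}=1$ requires $T_{ij}=1$, i.e. $p_i<p_j$. If $\mathbf{A}$ contained a directed cycle $i_1\to i_2\to\cdots\to i_k\to i_1$, chaining these inequalities would give $p_{i_1}<p_{i_2}<\cdots<p_{i_k}<p_{i_1}$, a contradiction; hence $\mathbf{A}$ is acyclic and therefore a DAG.

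The routine steps (evaluating the sigmoid limit and checking the Hadamard product entrywise) are straightforward, so the part demanding the most care is the diagonal: because $\mathrm{grad}(\mathbf{p})_{ii}=0$ regardless of $\mathbf{p}$, the matrix $T$ is never truly binary, and the argument must explicitly invoke the absence of self-loops (forward) and deduce $W_{ii}=0$ from binariness of $\mathbf{A}$ (converse) so that the stray value $\tfrac12$ is always multiplied away. I would also flag where the distinctness hypothesis $p_i\neq p_j$ is indispensable, namely to prevent off-diagonal entries from landing on the ambiguous value $\tfrac12$, which is precisely what guarantees that $T$ restricted to off-diagonal entries is $\{0,1\}$-valued.
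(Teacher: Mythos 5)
Your proof is correct and follows essentially the same route as the paper's: the forward direction builds $\mathbf{p}$ from a topological order of the given DAG and takes $\mathbf{W}=\mathbf{A}$, and the converse derives acyclicity from the fact that edges can only point from lower-score to higher-score nodes (your chained-inequality contradiction is equivalent to the paper's observation that the topological matrix is itself a DAG and any subgraph of a DAG is a DAG). Your version is in fact slightly more careful than the paper's, which asserts that $\lim_{t\rightarrow0}\mathrm{sigmoid}\left(\mathrm{grad}(\mathbf{p})/t\right)$ is a binary acyclic adjacency matrix without addressing the diagonal entries, which equal $\tfrac{1}{2}$ since $\mathrm{grad}(\mathbf{p})_{ii}=0$; your explicit handling of this (no self-loops in the forward direction, and $W_{ii}=0$ forced by binariness of $\mathbf{A}$ in the converse) closes that small gap.
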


\begin{proof}
We first show that for any DAG $\mathbf{A}$, there always exists
a pair $\left(\mathbf{W},\mathbf{p}\right)$ such that $\nu\left(\mathbf{W},\mathbf{p}\right)=\mathbf{A}$.
By leveraging Theorem 3.7 in \citep{yu2021dagswith}, we can see that
$\mathbf{p}$ implicitly define the topological order over vertices
of $\mathbf{A}$ such that:

\[
\mathrm{grad}\left(\mathbf{p}\right){}_{ij}>0\text{ when }\mathbf{A}_{ij}=1
\]

Then, 
\[
\lim_{t\rightarrow0}\text{sigmoid}\left(\frac{\mathrm{grad}\left(\mathbf{p}\right)}{t}\right)=1\text{ when }\mathbf{A}_{ij}=1
\]

Furthermore, we can choose $\mathbf{W}$ in the following way: 

\[
\mathbf{W}_{ij}=\begin{cases}
0 & \text{if }\mathbf{A}_{ij}=0\\
1 & \text{if }\mathbf{A}_{ij}=1
\end{cases}
\]

For an arbitrary topological (partial) order of the variables $\pi=\left(\pi_{1},\pi_{2},\dots,\pi_{d}\right)$,
it always defines a DAG where each edge $\left(i,j\right)$ corresponding
to $i\prec_{\pi}j$. To prove that the mapping $\nu\left(\mathbf{W},\mathbf{p}\right)$
always emits a DAG, let define a vector $\mathbf{p}\in\mathbb{R}^{d}$
such that $\mathbf{p}\left[\pi\left[i\right]\right]=i$. We have:
\begin{align*}
i\prec_{\pi}j & \Rightarrow\pi_{j}>\pi_{i}\\
 & \Rightarrow\mathbf{p}_{j}>\mathbf{p}_{i}\\
 & \Rightarrow i\prec_{\mathbf{p}}j
\end{align*}

Therefore, $\lim_{t\rightarrow0}\text{sigmoid}\left(\frac{\mathrm{grad}\left(\mathbf{p}\right)}{t}\right)$
outputs an acyclic binary adjacency matrix. Then, taking the element-wise
multiplication with any $\mathbf{W}$ gives us a sub-graph of a DAG,
which is also a DAG. 
\end{proof}

\subsection{Additional results}\label{sec:Addtional-results}

\textbf{Sampling time.} Figure \ref{fig:DAG-sampling-time} shows a the superlative running time of our proposed DAG model compared to Gumbel-Sinkhorn and Gumbel-Top-k for thousands of nodes. 

\begin{figure}[h]
\centering
\includegraphics[width=0.8\textwidth]{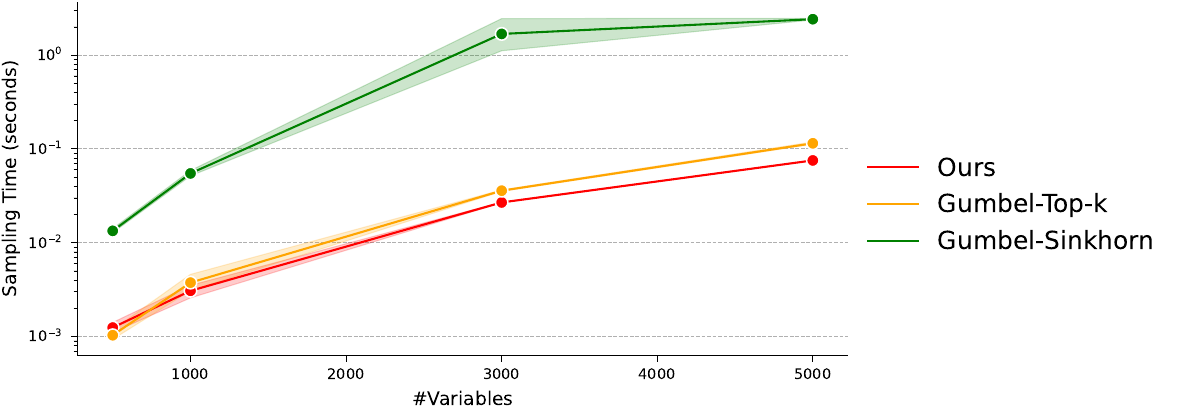}\caption{DAG sampling time in seconds of our proposed approach and two well-known DAG probabilistic models: Gumbel-Sinkhorn and Gumbel-Top-k for thousands of nodes.\newline}
\label{fig:DAG-sampling-time}
\end{figure}

\end{document}